\def\showproofs{0}
\theoremstyle{plain}
\newtheorem{example}{Example}
\newtheorem{lemma}{Lemma}
\newtheorem{proposition}{Proposition}
\theoremstyle{definition}
\newtheorem{definition}{Definition}
\newcommand{\bag}{\ensuremath{\mathcal{B}}}
\newcommand{\args}{\ensuremath{\mathcal{A}}}
\newcommand{\attacker}{\ensuremath{\mathrm{Att}}}
\newcommand{\supporter}{\ensuremath{\mathrm{Sup}}}
\newcommand{\labelling}{\ensuremath{\operatorname{L}}}
\newcommand{\labellings}{\ensuremath{\mathcal{L}}}
\newcommand{\lin}{\ensuremath{\operatorname{in}}}
\newcommand{\lout}{\ensuremath{\operatorname{out}}}
\newcommand{\lundec}{\ensuremath{\operatorname{und}}}
\newcommand{\rf}{\ensuremath{\mathcal{F}}}
\newcommand{\outputf}{\ensuremath{\mathrm{O}}}
\newcommand{\dt}{\ensuremath{\mathcal{T}}}
\newcommand{\cinput}{\ensuremath{\mathbf{x}}}
\newcommand{\coutput}{\ensuremath{y}}
\newcommand{\dom}{\ensuremath{\mathcal{D}}}
\newcommand{\class}{\ensuremath{\mathcal{C}}}
\newcommand{\ambSet}{\ensuremath{\mathrm{Amb}}}
\newcommand{\prem}{\ensuremath{\mathrm{prem}}}
\newcommand{\conc}{\ensuremath{\mathrm{conc}}}
\newcommand{\bX}{\ensuremath{\mathbf{U}}}
\newcommand{\bx}{\ensuremath{\mathbf{u}}}
\newcommand{\bY}{\ensuremath{\mathbf{V}}}
\newcommand{\by}{\ensuremath{\mathbf{v}}}
\newcommand{\bZ}{\ensuremath{\mathbf{W}}}
\newcommand{\bz}{\ensuremath{\mathbf{w}}}
\newcommand{\charf}{\ensuremath{\chi_{\rf}}}
\begin{document}

\title{Explaining Random Forests using Bipolar Argumentation and Markov Networks (Technical Report)}
\author{
   Nico Potyka\\
   Imperial College London \\
  \texttt{n.potyka@imperial.ac.uk}
  \and
   Xiang Yin\\
   Imperial College London \\
  \texttt{x.yin20@imperial.ac.uk}
  \and
   Francesca Toni\\
   Imperial College London \\
  \texttt{f.toni@imperial.ac.uk}
}
\date{}

\maketitle

\begin{abstract}
Random forests are decision tree ensembles that can be used 
to solve a variety of machine learning problems. However, as
the number of trees and their individual size can be large,
their decision making process is often incomprehensible.
In order to reason about the decision process, 
we propose representing it as an 
argumentation problem.
We generalize sufficient and necessary 
argumentative explanations using a 
Markov network encoding, discuss the relevance
of these explanations and establish
relationships to families of
abductive explanations from the literature.
As the complexity of the explanation problems is high,
we discuss a probabilistic approximation algorithm and present first
experimental results.
\end{abstract}

\section{Introduction and Related Work}


Random forests (RFs) \cite{breiman2001random} are machine
learning models with various applications in areas like E-commerce, 
Finance and Medicine. They consist
of multiple decision trees that use different subsets of the available
features. Given an input, every tree makes an individual decision and
the output of the random forest is obtained by a majority vote.
They have low risk of overfitting; support both classification
and regression tasks and come equipped with some feature 
importance measures \cite{breiman2001random}. However, 
feature importance measures can be
too simplistic as they can represent neither joint effects of features (e.g., multi-drug interactions) 
nor non-monotonicity (e.g.,
increasing the weight may be healthy for an underweight person,
but not for an overweight person).

In recent years, a variety of other explanation methods has been
proposed. Model-agnostic feature importance
measures like LIME \cite{ribeiro2016should}, SHAP \cite{lundberg2017unified} and MAPLE \cite{plumb2018model}
have similar limitations like the feature
importance measures defined for random forests.
Counterfactual explanations explain how an input can be 
modified to change the decision \cite{wachter2017counterfactual},
but mainly explain the model locally. Another interesting family of explanation methods
are abductive explanations, also called prime implicant explanations
\cite{ShihCD18,Izza021,WaldchenMHK21}. Roughly speaking,
abductive explanations are sufficient reasons for a classification.
Recently, SAT encodings have been applied to compute abductive
explanations in tree ensembles \cite{Izza021,IgnatievIS022}
and many other logic-based explanation approaches
have been investigated
\cite{Silva22,Cyras0ABT21,vassiliades2021argumentation}.

As random forests are essentially composed of rules, 
a natural question is if we can use logical tools
to reason in more flexible ways about random forests. 
Since the rules can be mutually inconsistent, non-classical
reasoning approaches seem best suited.
Here, we investigate abstract bipolar argumentation graphs (BAGs)
\cite{amgoud2008bipolarity,orenN08,boella2010support,cayrol_bipolarity_2013} for this purpose.
Intuitively,
BAGs allow identifying consistent subsets (extensions) 
of contradicting arguments and to reason about them.
We will show that the bi-stable semantics for BAGs
\cite{potyka_generalizing_2021} allows representing random
forests as BAGs such that the possible decisions made by
the forest correspond to extensions of the BAG. 
Finding sufficient and necessary reasons for
the classification of a random forest can 
then be reduced to finding sufficient and
necessary reasons in argumentation frameworks \cite{BorgB21}. 
In order to solve the combinatorial reasoning problems, we consider Markov network encodings of the BAG \cite{potyka_abstract_2020}, which
also allow reasoning about almost sufficient
and almost necessary reasons. As the computational
complexity of the problems is high, we consider 
a probabilistic algorithm to approximate
reasons
and present first experimental results.

The proofs of all technical results can be found in the appendix. The sourcecode and Jupyter notebooks to reproduce the experiments are available under
\url{https://github.com/nicopotyka/Uncertainpy}
in the folders \emph{src/uncertainpy/explanation}
and \emph{examples/explanations/randomForests}, respectively.

\section{Random Forests and Classes of AXps}

We will focus on Random forests for classification 
problems here.
The goal of classification is to
assign class labels $\coutput$
to inputs $\cinput$.
Inputs are 
vectors
$\cinput = (x_1, \dots, x_k)$, where the i-th value belongs to some feature $X_i$ with domain $D_i$.
We let $\dom = \bigtimes_{i=1}^k D_i$ denote the set of all inputs and
$\class$ the set of \emph{class labels}.
Figure \ref{fig:exp_rf} shows two 
decision trees
for a medical classification problem where
patients are diagnosed based on their 
age and three symptoms 
$A, B, C$ that can be present ($1$) or not ($0$).
\begin{figure}[t!b]
	\centering
		\includegraphics[width=0.7\textwidth]{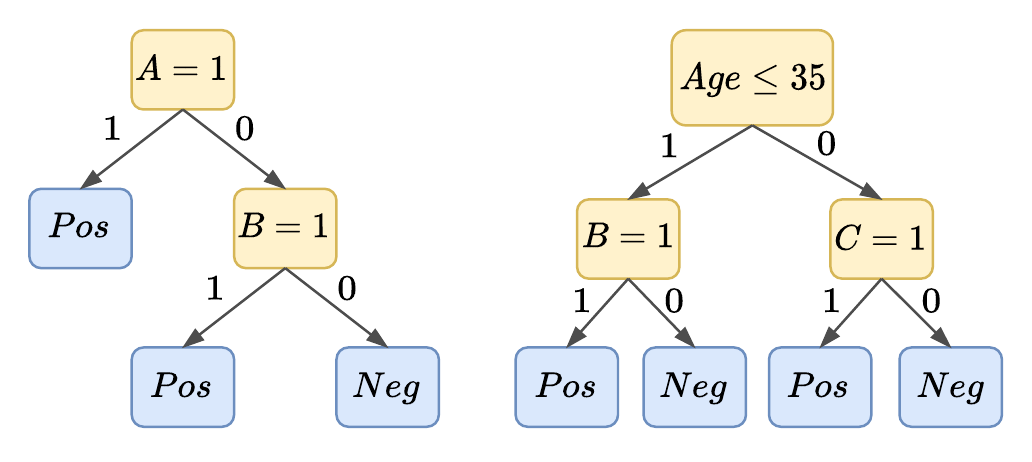}
	\caption{A simple random forest with two decision trees.}
	\label{fig:exp_rf}
\end{figure}
The diagnosis can be positive ($Pos$) or negative ($Neg$).
Formally, we understand trees as sets of
rules $\dt = \{r_1, \dots, r_{|\dt|}\}$.
A rule $r$ has the form $\prem(r) \rightarrow \conc(r)$,
where the premise $\prem(r)$ is a set of \emph{feature literals} 
and the conclusion $\conc(r) \in \class$
a class label.
Feature literals are positive or negative
\emph{feature conditions}.
Feature conditions (positive feature literals) have  
the form $X_i = v_i$ (categorical features) or 
$X_i \leq v_i$ (ordinal/numerical features), where $v_i \in D_i$.
Negative feature literals are negated feature conditions.
For example, the tree on the left in Figure
\ref{fig:exp_rf} can be represented by the three rules
$\{A=1\} \rightarrow Pos$,
 $\{A \neq 1, B=1\} \rightarrow Pos$,
$\{A \neq 1, B \neq 1\} \rightarrow Neg$.
Note that the rules are exhaustive and
exclusive, that is, for every input $\cinput \in \dom$,
there is one and only one rule that applies.
We call this rule the \emph{active rule in $\dt$ for $\cinput$}.

A random forest $\rf = \{\dt_1, \dots, \dt_t\}$ is a collection
of decision trees. It processes an input $\cinput$
by computing the outputs $\coutput_1, \dots, \coutput_t$
for $\cinput$ for all decision trees. 
Then, it returns the class label that was selected most frequently. We assume that 
$\bot$ is returned if
multiple class labels receive the maximum number of votes (a \emph{tie}).
We let 
$\outputf_{\rf}: \dom \rightarrow \class \cup \{\bot\}$
denote the \emph{output function} for $\rf$, where
$\outputf_{\rf}(\cinput) = y$ if $\rf$ outputs class $y$ for $\cinput$ and $\outputf_{\rf}(\cinput) = \bot$ if there is a tie.
\begin{example}
Consider the random forest composed of the
two trees in Figure \ref{fig:exp_rf} and
a patient aged $25$ with symptoms $A$ and $B$ present. Then both decision trees will return $Pos$
and the output of the random forest is $Pos$. If symtom $B$ was not present, the decision tree on the left would return
$Pos$, while the decision tree on the right would return $Neg$.
In this case, the output is $\bot$.
\end{example}
In order to understand the decision process of 
random forests, we can consider 
abductive explanations \cite{ShihCD18,Izza021}.
A \emph{weak abductive explanation (wAXp)}, for a class label 
$y \in \class$ is a 
partial assignment
to the features such that every 
completion $\cinput$ of this partial 
assignment satisfies 
$\outputf(\cinput) = y$
\cite{HuangIICA022}. If a wAXp cannot
be shortened, then it is called
an \emph{abductive explanation (AXp)}
or prime implicant \cite{HuangIICA022}.
For example, the partial assignment
$(A=1, B=1, Age=20)$ is a wAXp for 
$Pos$ with respect
to the random forest in Figure
\ref{fig:exp_rf}. However, it is not
an AXp because it can be shortened to $(B=1, Age=20)$,
which is, in fact,
an AXp. 
\cite{WaldchenMHK21}
recently generalized AXps.
Roughly speaking, a partial assignment is called a \emph{$\delta$-relevant explanation
for class $y$} if the probability that a
completion satisfies
$\outputf(\cinput) = y$ is at least $\delta$ 
\cite{WaldchenMHK21} (where we consider a uniform distribution over the completions). For brevity, we will call them 
$\delta$-AXps in the following. Note that $1$-AXps are wAXps. 
Finding and even deciding if a partial
assignment is an ($\delta$-)AXp is 
difficult as complexity results in
\cite{Izza021} and \cite{WaldchenMHK21}
show.

\section{Ambiguous and Indistinguishable Inputs}

Random forests may be unable
to make a decision due to a tie in the individual
tree decisions.
For binary classification problems,
we can always avoid a tie by creating a forest with an odd number of trees. 
However, if we have more than two classes, there
is no simple workaround.
We call the undecided inputs \emph{ambiguous} 
and let 
$\ambSet(\rf) = \{\cinput \in \dom \mid \outputf_{\rf}(\cinput) = \bot\}$ denote the set of all ambiguous inputs.

The following proposition explains that 
analyzing ambiguity is a difficult 
problem even for simple random 
forests that contain only boolean features
and have at most $4$ leaves/rules.
We call this special case \emph{B4L random forests}.
\begin{proposition}
\label{prop_ambiguous_complexity}
\begin{itemize}
    \item Deciding if there exists 
    an ambiguous input for a B4L random forest 
    is NP-complete.
    \item Counting the number of ambiguous inputs for a B4L random forest is \#P-complete.
\end{itemize}
\end{proposition}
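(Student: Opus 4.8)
The plan is to prove the two memberships by a routine witness/counting argument and to obtain both hardness results from a single parsimonious reduction from $3$SAT (resp.\ \#$3$SAT). For membership, observe that deciding whether a fixed input $\cinput \in \dom$ is ambiguous is a polynomial-time test: since every tree of a B4L forest has at most $4$ leaves, we can evaluate all trees on $\cinput$, tally the votes, and check whether two class labels attain the maximum, all in time polynomial in the size of $\rf$. Hence an ambiguous input is a polynomial-size, polynomially verifiable certificate, placing the existence problem in NP; and counting the inputs that satisfy this polynomial-time predicate places the counting problem in \#P.

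For hardness I would reduce from $3$SAT (resp.\ \#$3$SAT). Given a $3$-CNF formula $\phi = C_1 \wedge \dots \wedge C_m$ over boolean variables $x_1,\dots,x_n$, take the features to be exactly $X_1,\dots,X_n$ with domain $\{0,1\}$ and use the two class labels $Pos$ and $Neg$. For each clause $C_j = (\ell_{j1} \vee \ell_{j2} \vee \ell_{j3})$ build a \emph{clause tree} $\dt_j$ as a caterpillar with exactly $4$ leaves: test whether $\ell_{j1}$ holds, then on the false branch whether $\ell_{j2}$ holds, then whether $\ell_{j3}$ holds, labelling the three ``some literal is true'' leaves with $Pos$ and the single ``all three literals false'' leaf with $Neg$. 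Then $\dt_j$ outputs $Neg$ precisely on the inputs that violate $C_j$ and $Pos$ otherwise, and its four leaves form an exhaustive and exclusive partition as required. Finally add $m$ \emph{balancing trees} that output $Neg$ on every input (realised, e.g., as a $2$-leaf tree whose two leaves both carry $Neg$). The resulting forest $\rf$ has $2m$ trees, each with at most $4$ leaves over boolean features, so it is a valid B4L forest, and the construction is computable in polynomial time.

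Correctness hinges on a vote count. Identify every input $\cinput$ with the corresponding truth assignment. If $\cinput$ satisfies exactly $k$ of the $m$ clauses, then the clause trees cast $k$ votes for $Pos$ and $m-k$ votes for $Neg$, while the balancing trees cast a further $m$ votes for $Neg$; since $Pos$ and $Neg$ are the only labels, $\outputf_{\rf}(\cinput) = \bot$ iff $k = 2m-k$, i.e.\ iff $k = m$ (using $k \le m$), i.e.\ iff $\cinput$ satisfies \emph{all} clauses. Consequently an ambiguous input exists iff $\phi$ is satisfiable, which gives NP-hardness; and because $\cinput \mapsto$ assignment is a bijection between ambiguous inputs of $\rf$ and satisfying assignments of $\phi$, the reduction is parsimonious, which gives \#P-hardness of the counting problem. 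Together with the memberships this yields NP-completeness and \#P-completeness.

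I expect the main obstacle to be the gadget design rather than the counting. The key observation is that a clause-\emph{violation} detector is exactly expressible by a $4$-leaf tree, so the $4$-leaf budget is tight and no auxiliary features are needed, and that pairing the $m$ clause trees with $m$ constant-$Neg$ trees converts ``all clauses satisfied'' into a perfect two-way tie. Once this gadget is fixed, verifying the exhaustiveness and exclusiveness of each tree, the polynomial size bound, and the bijection underlying parsimony are all routine. A minor point to double-check is that the count is insensitive to variables not occurring in $\phi$: such variables are free on both sides, so the bijection (hence parsimony) is unaffected.
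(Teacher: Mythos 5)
Your proof is correct and follows essentially the same route as the paper's: NP/\#P membership via the polynomial-time verifiability of ambiguity, and hardness via a parsimonious reduction from 3SAT that builds one 4-leaf caterpillar tree per clause plus $m$ constant-$Neg$ balancing trees, so that an input is ambiguous exactly when it satisfies all clauses (an $m$-vs-$m$ tie). The only differences are cosmetic (class labels $Pos$/$Neg$ instead of $1$/$0$, two-leaf balancing trees instead of single-leaf stumps, and your explicit remark about variables not occurring in any clause, which the paper leaves implicit).
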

\if\showproofs1
\begin{proof}
We use the terminology and conventions from \cite{papadimitriou95book} in the proof.

1. Membership follows from observing that an
ambiguous input is a polynomially-sized 
certificate that can be verified in 
polynomial time (because the random forest processes inputs in linear time).
For hardness, we show that there is a 
polynomial-time reduction from 3SAT to 
deciding the existence of ambiguous inputs
for B4L random forests. 
Given an arbitrary
3CNF formula $F: \bigwedge_{i=1}^m \big( \bigvee_{i=1}^3 l_{i,j} \big)$ with $m$ clauses, we create a decision tree of depth 3 for every clause $\big( \bigvee_{i=1}^3 l_{i,j} \big)$. It contains 
three decision nodes that contain the atoms corresponding to $l_{i,1}, l_{i,2}, l_{i,3}$. If $l_{i,j}$ is a positive literal,
the positive branch will go to a leaf labelled $1$ and the
negative branch to $l_{i,j+1}$ if $j<3$ and to a leaf labelled
$0$ otherwise. A negative literal is treated symmetrically by
switching the negative and positive branch. We illustrate the
construction in Figure \ref{fig_clause_2_tree}.
\begin{figure}[tb]
	\centering
		\includegraphics[width=0.2\textwidth]{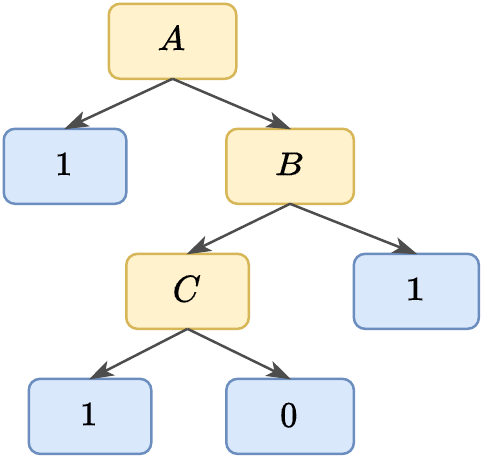}
	\caption{Tree for the 3-clause $A \wedge \neg B \wedge C$.}
	\label{fig_clause_2_tree}
\end{figure}
We now define a B4L random forest consisting of the $m$ trees
obtained from the clauses and $m$ additional tree stumps that
consists of a single leaf that is labelled $0$. Every tree
has constant size and so the size of the random forest is linear
in the size of $F$. Furthermore, 
$F$ is satisfiable 
\begin{itemize}
    \item iff all clauses can be satisfied simultaneously,
    \item iff there is an input such that all clause trees
    output $1$,
    \item iff there is an ambiguos input (because the $m$ 
    clause trees will vote $1$ and the $m$ tree stumps will vote $0$).
\end{itemize}
Hence, $F$ is satisfiable if and only
if the random forest has an ambiguous input. 

2. As (ambiguous) inputs are polynomial in the size of the 
random forest and can be processed in polynomial time, the
problem is in $\#P$. Hardness  follows from observing that
our previous reduction is a parsimonious reduction from
3SAT, that is, the number of satisfying assignments of the
3CNF formula
equals the number of ambiguous inputs of the random forest.
Hence, the claim follows from $\#P$-hardness of $\#3SAT$. 
\end{proof}
\fi
If $\rf$ contains variables with 
infinite domains,
the number of ambiguous inputs can be 
infinite. However, it is always
possible to partition the inputs into a
finite set of equivalence classes.
To make this more precise,
let us first note that every random forest
yields a natural partition of the input
domains based on the feature conditions that occur in the forest.
\begin{definition}[Domain Partition]
\label{def_dom_part}
The \emph{domain partition associated with $\rf$} partitions every domain $D_i$ into
disjoint subsets $S_{i,1}, \dots, S_{i,n_i}$
such that $D_i = \uplus_{j=1}^{n_i} S_{i,j}$.
If $D_i = \{v_1, \dots, v_{|D_i|}\}$ is finite,
then $n_i = |D_i|$ and $S_{i,j} = \{v_j\}$.
If $D_i$ is continuous,
let $X_i \leq v_1, \dots, X_i \leq v_{|D_i|}$
denote the feature conditions that occur
in $\rf$ for $X_i$ and assume w.l.o.g. that $v_1 \leq \dots \leq v_{|D_i|}$.
Then $n_i = {|D_i|} +1$, 
$S_{i,j} = (v_{j-1}, v_j] = 
\{v \in D_i \mid v_{j-1} < v \leq v_j \}$
where $v_0 = \inf D_i$,
and 
$S_{|D_i|+1} = [v_{|D_i|} ,\sup D_i)$.
\end{definition}
\begin{example}
For the random forest in Figure \ref{fig:exp_rf}, the domains of $A$, $B$ 
and $C$ are partitioned into $\{0\}$
and $\{1\}$. For $Age$, the domain
is partitioned into $(- \infty, 35]$
and $(35, \infty)$.
\end{example}
Note that the number of partitioning
sets is always finite because random forests
are finite. Furthermore, when we chose one
partition index $i_j$ for every feature $X_i$,
then all inputs in 
$S_{i_1} \times \dots \times S_{i_k} \subseteq \dom$
are indistinguishable for the trees 
and, therefore, are all classified in
the same way.
To capture these indistinguishable inputs,
we define the \emph{characteristic function
of $\rf$} as the mapping 
$\charf: \dom \rightarrow \mathbb{N}^k$
that maps every input $\cinput$ to a 
$k$-dimensional vector $v = \charf(\cinput)$
such that
$x_i \in S_{i, v_i}$ for
all $i = 1, \dots, k$.
\begin{definition}[Indistinguishability Relation]
\label{def_indisting_rel}
Two inputs $\cinput_1, \cinput_2 \in \dom$
are \emph{indistinguishable with respect to 
$\rf$} iff $\charf(\cinput_1) = \charf(\cinput_2)$. We denote this by $\cinput_1 \equiv_\rf \cinput_2$.
\end{definition}
It is easy to check
that indistinguishability is an equivalence 
relation and that the equivalence classes 
$E \in \dom /\! \equiv_\rf$
correspond to the sets
$S_{i_1} \times \dots \times S_{i_k}$
that we obtain by choosing one partition
index $i_j$ for every feature. 

Let us note that while
$\ambSet(\rf)$ can be infinite,
the set of equivalence classes of
ambiguous inputs
$\ambSet(\rf) /\! \equiv_\rf$ is always
finite. Hence,
we can now ask, what is
the number of ambiguous equivalence classes? If all domains are
finite, this is equivalent to counting
the number of ambiguous inputs because,
in this case, every equivalence class 
contains exactly one input. 
Hence, Proposition
\ref{prop_ambiguous_complexity}
implies that counting the ambiguous equivalence classes
is $\#P$-hard as well.

\section{Representing Random Forests as BAGs}

In order to reason about the decision process
of random forests, we represent it as a 
\emph{bipolar argumentation graph (BAG)}.
Formally, a BAG is a tuple
$\bag = (\args, \attacker, \supporter)$, where $\args$ is a finite set of arguments, $\attacker \subseteq \args \times \args$ is the \emph{attack relation} and 
$\supporter \subseteq \args \times \args$ is the support relation \cite{cayrol_bipolarity_2013}. 
We let $\attacker(A) = \{B \mid (B,A) \in \attacker\}$ denote the attackers of $A$ and, analogically, 
$\supporter(A)$ its supporters.

Various semantics have been proposed for BAGs. 
We use the bi-complete semantics from \cite{potyka_generalizing_2021}
here, which generalizes the complete
semantics \cite{dung_acceptability_1995}
and resolves conflicts between attackers
and supporters by means of majority votes. 
It is based on labellings $L: \args \rightarrow \{\lin, \lout, \lundec\}$
that assign a label $\lin$ (accept), $\lout$ (reject) or
$\lundec$ (undecided) to every argument.
Given a labelling $\labelling$, we say that the attackers of an argument \emph{dominate} its supporters if 
$|\{B \in \attacker(A) \mid \labelling(B) = \lin\}| > |\{B \in \supporter(A) \mid \labelling(B) \neq \lout\}|$.
That is, for every supporter that is not out, there is an attacker that is in and there is at least one additional
attacker that is in. Intuitively, every non-rejected pro-argument is balanced out by an accepted counterargument
and there is an additional counterargument that breaks a potential tie.
Symmetrically, the supporters of an argument \emph{dominate} its attackers if
$|\{B \in \supporter(A) \mid \labelling(B) = \lin\}| > |\{B \in \attacker(A) \mid \labelling(B) \neq \lout\}|$.
Given a BAF $(\args, \attacker, \supporter)$, we call a labelling $\labelling: \args \rightarrow \{\lin, \lout, \lundec\}$
\begin{description}
	\item[Bi-complete \cite{potyka_generalizing_2021}: ] if $\labelling$ satisfies
		\begin{enumerate}
				\item $\labelling(A) = \lin$ if and only if $\labelling(B) = \lout$ for all $B \in \attacker(A)$ or 
				 $A$'s supporters dominate its attackers.
				\item $\labelling(A) = \lout$ if and only if $A$'s attackers dominate its supporters.
		\end{enumerate}
\end{description}
A bi-complete labelling is called \emph{bi-stable}
if it does not label any argument undecided.
We let $\labellings^c(\bag)$ and $\labellings^s(\bag)$ 
denote the bi-complete and bi-stable labellings of
the BAG $\bag$.

Given a random forest $\rf = \{\dt_1, \dots, \dt_t\}$,
we want to represent it as a BAG $\bag_{\rf,\cinput}$ such that the labellings of $\bag_{\rf,\cinput}$
correspond to the possible inputs and decisions
of $\rf$.
To do so, we first associate a collection
of arguments with $\rf$.
\begin{definition}[Explanation Arguments]
The \emph{explanation arguments} 
$\args_{\rf} = \args_{\class} \cup \args_R \cup \args_F$ associated with the random forest $\rf$ are defined as follows:
\begin{itemize}
    \item $\args_\class = \{A_y \mid y \in \class\}$ contains one \emph{class arguments} for every class,
    \item  $\args_{R} = \bigcup_{\dt \in \rf} \args_{\dt}$, where 
    $\args_{\dt} = \{A_{\dt, r} \mid r \in \dt\}$ contains a \emph{rule argument} $A_{\dt, r}$ for every tree $\dt$ in $\rf$ and every rule $r \in \dt$,
    \item $\args_F = \bigcup_{i=1}^n \args_{X_i}$, contains one 
    \emph{feature argument}
    for every partitioning set of the 
    feature domain (Def. \ref{def_dom_part}),
    that is,
    $\args_{X_i} = \{A_{X_i \in S_{i,1}}, \dots, 
    A_{X_i \in S_{i,n_i}}\}$.
    \end{itemize}
\end{definition}

Next, we explain the attack and support relations in $\bag_{\rf}$. Intuitively, $\bag_{\rf}$ is a layered
graph with the feature arguments $\args_F$
at the bottom, the rule arguments $\args_R$ in the middle
and the class arguments $\args_\class$ at the top.
Attack edges occur only within the feature layer, 
from the feature to the rule and 
from the rule to the class layer.
Support edges occur only 
from the rule to the class layer.
\begin{definition}[Explanation Argument Relationships]
The attack and support relationships $\attacker_{\rf} = \attacker_{F,F} \cup \attacker_{F,R} \cup \attacker_{R, \class}$ and $\supporter_{\rf} =  \supporter_{R, \class}$ associated with 
the random forest $\rf$ are defined as 
follows:
\begin{itemize}
    \item $\attacker_{F,F}$ 
    contains a \emph{feature-feature-attack} 
    between all feature arguments that belong to the same feature. That is, 
    $(A_{f_1}, A_{f_2}) \in \attacker_{F,F}$
    if and only if 
    $A_{f_1}, A_{f_2} \in \args_{X_i}$,
    \item  $\attacker_{F,R}$ contains a \emph{feature-rule-attack}
    $(A_{X_i \in S_{i,j}}, A_{\dt, r})$ if
    the feature constraint $X_i \in S_{i,j}$ is inconsistent 
    with a feature literal 
    $L \in \prem(r)$ (e.g., the feature constraint $X \in (1,3]$ is inconsistent with
    the feature literal $X>6$),
    \item $\attacker_{R,\class}$ contains 
    a \emph{rule-class-attack}
    $(A_{\dt, r}, A_y)$ for every rule argument
    $A_{\dt, r}$ with $\conc(r) \neq y$,
    \item  $\supporter_{R,\class}$ contains a \emph{rule-class-support}  
    $(A_{\dt, r}, A_y)$ for every rule argument
    $A_{\dt, r}$ with $\conc(r) = y$.
\end{itemize}
\end{definition}
Intuitively, feature-feature attacks
guarantee that only one feature argument
per feature can be accepted (because they refer to distinct feature values/ranges).
Feature-rule attacks deactivate rules that
are inconsistent with the currently
accepted feature configuration.
The rule-class relationships support/attack
classes according to their claim.
The Explanation BAG associated with
a random forest is then constructed from
the explanation arguments and the attack
and support relationships between them.
\begin{definition}[Explanation BAG]
Given a random forest $\rf$, the \emph{explanation BAG} for $\rf$ is the BAG $\bag_{\rf}= (\args_{\rf}, \attacker_{\rf}, \supporter_{\rf})$.
\end{definition}
We note that $\bag_{\rf}$ can be constructed in quadratic time. The reason for the quadratic
blowup is that we have pairwise attacks 
between feature arguments for the same
feature.
\begin{proposition}
$\bag_{\rf}$ can be generated from $\rf$ in quadratic time.
\end{proposition}
\if\showproofs1
\begin{proof}
We have one class argument per class,
one rule argument per leaf in a decision tree and one feature argument per feature condition. Hence, the number of arguments is
linear in the size of the random forest.
Since the number of edges between the arguments
can be at most quadratic, the claim follows.
\end{proof}
\fi

\subsection{Faithfulness of the Explanation BAG}

As we show next, the explanation BAG $\bag_{\rf}$
is a faithful representation of $\rf$ in the
following sense: 
every bi-stable labelling
of $\bag_{\rf}$ 
represents a possible decision made by 
$\rf$ (\emph{correctness}) and for every possible decision that $\rf$ can make, 
there is a bi-stable labelling of $\bag_{\rf}$
that represents it (\emph{completeness}).
The following lemma motivates the use of bi-stable labellings.
\begin{lemma}
\label{lemma_bistable_feature_properties}
Let $L$ be a labelling for $\bag_{\rf}$.
\begin{enumerate}
    \item If $L \in \labellings^c(\bag_{\rf})$, then for all features $X_i$, either
    \begin{itemize}
    \item $L(A_{X_i \in S_{i,j}}) = \lundec$ for all $A_{X_i \in S_{i,j}} \in \args_{X_i}$ or
    \item $L(A_{X_i \in S_{i,j}}) = \lin$ for exactly one $A_{X_i \in S_{i,j}} \in \args_{X_i}$ and $L(A_{X_i \in S_{i,j'}}) = \lout$ for all
    other $A_{X_i \in S_{i,j'}} \in \args_{X_i} \setminus \{A_{X_i \in S_{i,j}}\}$. 
    \end{itemize}
    \item If $L \in \labellings^s(\bag_{\rf})$, then for all features $X_i$,
    $L(A_{X_i \in S_{i,j}}) = \lin$ for exactly one $A_{X_i \in S_{i,j}} \in \args_{X_i}$ and $L(A_{X_i \in S_{i,j'}}) = \lout$ for all
    other $A_{X_i \in S_{i,j'}} \in \args_{X_i} \setminus \{A_{X_i \in S_{i,j}}\}$.
\end{enumerate}
\end{lemma}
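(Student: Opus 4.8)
The plan is to reduce the general majority-vote bi-complete conditions to a simple local criterion on feature arguments and then run a short case analysis on each feature group. The key structural observation, read off directly from the definition of the relationships, is that a feature argument $A = A_{X_i \in S_{i,j}}$ has an empty supporter set, since support edges run only from rule to class arguments, and that its attacker set is exactly the set of other feature arguments of the same feature, $\attacker(A) = \args_{X_i} \setminus \{A\}$, since feature-feature attacks are the only attacks targeting a feature argument. Using $\supporter(A) = \emptyset$, the clause ``$A$'s supporters dominate its attackers'' requires $0 = |\{B \in \supporter(A) \mid L(B) = \lin\}| > |\{B \in \attacker(A) \mid L(B) \neq \lout\}| \geq 0$, which is impossible and hence never contributes. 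Thus the two defining conditions collapse to: $L(A) = \lin$ iff $L(B) = \lout$ for every other $B \in \args_{X_i}$; and $L(A) = \lout$ iff some other $B \in \args_{X_i}$ has $L(B) = \lin$.

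For Part~1, fix a feature $X_i$ and split into two cases. If some member of $\args_{X_i}$ is labelled $\lin$, then applying the specialized $\lin$-condition to it forces every other member to be $\lout$; in particular no second member can be $\lin$, so exactly one argument is $\lin$ and all others are $\lout$, which is the second bullet. Otherwise no member of $\args_{X_i}$ is $\lin$; then the specialized $\lout$-condition fails for every member (it demands an $\lin$ attacker, which does not exist), so no member is $\lout$ either, forcing all of $\args_{X_i}$ to be $\lundec$, the first bullet. Consistency of each described configuration is immediate from the same two criteria, so both alternatives are genuinely realizable and no other pattern can occur.

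Part~2 then follows at once from Part~1 together with the definition of bi-stability: a bi-stable labelling assigns no argument the label $\lundec$, so the ``all undecided'' alternative is excluded, leaving only the configuration with exactly one feature argument labelled $\lin$ and all others labelled $\lout$. I expect the only real obstacle to lie in the first step, namely correctly recognizing that the supporter-domination clause is vacuous for feature arguments and thereby reducing the majority-vote semantics to the plain two-line criterion; once that specialization is in place, the case analysis for Part~1 and the deduction of Part~2 are routine.
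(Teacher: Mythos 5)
Your proof is correct and takes essentially the same approach as the paper's: both rest on the observations that feature arguments have no supporters (so the supporter-domination clause is vacuous) and are attacked exactly by the other arguments of the same feature, reduce the bi-complete conditions to local criteria, settle Part~1 by a case analysis, and obtain Part~2 from the fact that bi-stable labellings exclude $\lundec$. If anything, your explicit collapse of the majority-vote clauses and the clean case split on whether some member of $\args_{X_i}$ is labelled $\lin$ is stated slightly more sharply than the paper's version of the same argument.
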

\if\showproofs1
\begin{proof}
1. Note that, by construction, all arguments in $\args_{X_i}$
 attack each other.
 Furthermore, feature arguments do not have supporters.
 Hence, if one argument in $\args_{X_i}$ is $\lundec$,
 then all others must be $\lundec$ as well.
 Since arguments in $\args_{X_i}$ are only attacked by arguments in $\args_{X_i}$,
 if one argument in $\args_{X_i}$ is $\lout$, 
 then there must be another argument in $\args_{X_i}$
 that attacks it and is $\lin$. Hence, we are either in
 the situation where all arguments are undecided or 
 at least one argument is $\lin$.
 However, if $L(A_{X_i \in S_{i,j}}) = \lin$, then all other $A_{X_i \in S_{i,j'}} \in \args_{X_i}$ must be out.

2. The claim follows immediately from item 1 because
 bi-stable labellings do not allow labelling arguments
 undecided.
\end{proof}
\fi
Lemma \ref{lemma_bistable_feature_properties}
states that bi-complete labellings either accept exactly
one feature constraint per feature or remain undecided. Since the undecided case is not 
interesting for our purposes, we focus on bi-stable
labellings.
The fact that bi-stable labellings
accept exactly one constraint per feature
allows us to associate every 
bi-stable labelling $L$ of $\bag_{\rf}$ with 
an equivalence class
$S_L \in \dom /\! \equiv_\rf$
of inputs
with respect to the indistinguishability
relation (Def. \ref{def_indisting_rel}). 

As we show next, every bi-stable labelling 
$L$ accepts exactly one
rule argument per tree. 
This rule argument corresponds to the
active path in the tree for all inputs $\cinput \in S_L$ in the corresponding equivalence class
$S_L \in \dom /\! \equiv_\rf$.
\begin{lemma}
\label{lemma_bistable_rule_properties}
If $L \in \labellings^s(\bag_{\rf})$, then for all trees $\dt \in \rf$,
$A_{\dt, r} \in \args_{\dt}$ is labelled $\lin$
if and only if $r$ is the active rule in $\dt$ 
for all inputs $\cinput \in S_L$. Furthermore, all other rule
arguments in $\args_{\dt}$ are labelled out.
\end{lemma}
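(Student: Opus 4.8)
The plan is to combine two structural simplifications with a ``homogeneity'' observation about the domain partition. First I would invoke Lemma~\ref{lemma_bistable_feature_properties}(2): under a bi-stable labelling $L$, every feature $X_i$ has exactly one feature argument $A_{X_i \in S_{i,j_i}}$ labelled $\lin$ and all others $\lout$. These ``selected'' cells determine the associated equivalence class $S_L = S_{1,j_1} \times \dots \times S_{k,j_k}$ (cf.\ Def.~\ref{def_indisting_rel}), and no feature argument is ever $\lundec$ under $L$. Second, I would observe that rule arguments receive no support, since $\supporter_{\rf} = \supporter_{R,\class}$ runs only from rules to classes. Hence for $A_{\dt,r}$ the two bi-complete clauses collapse: as its supporter set is empty, ``supporters dominate'' is impossible (a count of $0$ cannot exceed a nonnegative number) and ``attackers dominate'' reduces to the existence of at least one $\lin$ attacker. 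Because $L$ is bi-stable, this yields the clean dichotomy that $L(A_{\dt,r}) = \lin$ iff all attackers of $A_{\dt,r}$ are $\lout$, and $L(A_{\dt,r}) = \lout$ iff some attacker is $\lin$. Combined with the previous point, an attacker of $A_{\dt,r}$ is $\lin$ precisely when it is one of the selected feature arguments.

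The core of the proof is then to connect ``a selected feature argument attacks $A_{\dt,r}$'' with ``$\prem(r)$ fails on $S_L$''. The key lemma I would establish is that each partition cell $S_{i,j}$ is homogeneous with respect to every feature literal occurring in $\rf$: either all values in $S_{i,j}$ satisfy the literal or all violate it. This follows from Def.~\ref{def_dom_part}, since the cells are built exactly from the thresholds/values of the feature conditions appearing in $\rf$ (singletons in the categorical case; half-open intervals cut at the thresholds $v_1 \leq \dots \leq v_{|D_i|}$ in the continuous case). Given homogeneity, the feature-rule attack condition ``$X_i \in S_{i,j_i}$ is inconsistent with some $\ell \in \prem(r)$'' becomes equivalent to ``every $\cinput \in S_L$ violates $\ell$''. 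Ranging over the literals, $A_{\dt,r}$ has a selected (hence $\lin$) attacker iff some literal of $\prem(r)$ is violated throughout $S_L$ iff $\prem(r)$ is not satisfied by the inputs in $S_L$. Therefore $L(A_{\dt,r}) = \lin$ iff $\prem(r)$ holds on all of $S_L$, i.e.\ iff $r$ is the active rule for every $\cinput \in S_L$, which is well defined since indistinguishable inputs share an active rule.

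Finally, for the ``furthermore'' clause I would appeal to exhaustiveness and exclusiveness of the rules: each tree has exactly one active rule for the inputs in $S_L$, so by the equivalence just shown exactly one argument in $\args_{\dt}$ is $\lin$; bi-stability forbids $\lundec$, so all remaining rule arguments of $\dt$ are $\lout$. I expect the homogeneity step to be the main obstacle, as it is where the purely combinatorial attack structure of $\bag_{\rf}$ must be matched to the semantics of rule activation; the continuous case in particular requires checking that each interval cell lies entirely on one side of every threshold $v$ appearing in a premise condition $X_i \leq v$.
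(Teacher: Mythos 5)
Your proof is correct and takes essentially the same route as the paper's: both reduce the claim, via Lemma~\ref{lemma_bistable_feature_properties}, to the observation that an $\lin$-labelled (selected) feature argument attacks $A_{\dt,r}$ exactly when some premise literal of $r$ fails on all of $S_L$, so that $A_{\dt,r}$ is $\lin$ iff $r$ is active throughout $S_L$. Your version is simply a more rigorous write-up of the paper's terse argument---in particular, you make explicit the cell-homogeneity of the domain partition and the fact that rule arguments have no supporters (so the bi-complete clauses collapse to the attacker dichotomy), both of which the paper leaves implicit.
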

\if\showproofs1
\begin{proof}
Every input
$\cinput \in S_L$ satisfies all
feature constraints accepted by $L$. 
Therefore, a rule argument $A_{\dt, r} \in \args_{\dt}$ 
is supported (attacked) by
an $\lin$-labelled feature argument if and only if
$\cinput$ satisfies (violates) the corresponding feature
condition. Therefore, $A_{\dt, r} \in \args_{\dt}$ is labelled $\lin$ if and only if $r$ is the active rule in $\dt$ for $\cinput$

The second claim follows from the same considerations
because the inactive rules must be attacked by the
$\lin$-labelled arguments.
\end{proof}
\fi
We can now show that our
enconding is correct in the sense that
a class argument $A_y$ can be labelled 
$\lin$ by $L$ if and only if $\outputf_{\rf}(\cinput) = y$
for all $\cinput \in S_L$.
\begin{proposition}[Correctness]
\label{prop_bistable_class_properties}
If $L \in \labellings^s(\bag_{\rf})$, then for all class arguments
$A_y \in \args_{\class}$,
$L(A_y) = \lin$ if and only if $\outputf_{\rf}(\cinput) = y$ for all $\cinput \in S_L$. Furthermore, if $L(A_y) = \lin$, then
$L(A_{y'}) = \lout$ for all $A_{y'} \in \args_{\class}$.
\end{proposition}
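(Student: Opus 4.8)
The plan is to reduce the labelling of each class argument to a vote count and then read off the bi-complete acceptance conditions directly. First I would invoke Lemma~\ref{lemma_bistable_rule_properties}: because $L$ is bi-stable, in every tree $\dt \in \rf$ exactly one rule argument is labelled $\lin$, namely the one for the active rule on the inputs in $S_L$, and all remaining rule arguments of $\dt$ are $\lout$. Since each tree has a unique active rule, the $\lin$-labelled rule arguments are exactly one per tree, and the conclusion of that rule is the vote $\dt$ casts on every $\cinput \in S_L$. Writing $t = |\rf|$ and $n_y$ for the number of trees whose active rule concludes $y$, the $\lin$-labelled supporters of $A_y$ (rule arguments concluding $y$) number exactly $n_y$, and its $\lin$-labelled attackers (rule arguments concluding $\neq y$) number exactly $t - n_y$.

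Next I would specialise the bi-complete conditions to $A_y$, using that a bi-stable labelling assigns no argument $\lundec$, so that ``not $\lout$'' coincides with ``$\lin$'' throughout. The two domination tests then collapse to comparisons of these counts: $A_y$'s attackers dominate its supporters iff $t - n_y > n_y$, and its supporters dominate its attackers iff $n_y > t - n_y$. The ``all attackers $\lout$'' alternative in the acceptance clause is just the case $n_y = t$, which already satisfies $n_y > t - n_y$, so it is absorbed. Hence $L(A_y) = \lin$ iff $n_y > t - n_y$ and $L(A_y) = \lout$ iff $t - n_y > n_y$; bi-stability rules out the borderline $n_y = t - n_y$, which would force $A_y$ to be $\lundec$, so exactly one case applies.

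It then remains to identify $n_y > t - n_y$ with $\outputf_{\rf}(\cinput) = y$ for $\cinput \in S_L$. As all inputs in $S_L$ are indistinguishable, each tree votes identically on them, so $\outputf_{\rf}$ is constant on $S_L$ and $n_y$ counts exactly the votes for $y$; I would argue that $n_y > t - n_y$ is precisely the condition under which $y$ is returned by the vote, with the excluded borderline corresponding to the tie that produces $\bot$. The ``furthermore'' clause is then immediate: if $L(A_y) = \lin$ then $n_y > t - n_y$, hence $n_y > t/2$, and since $\sum_{y'} n_{y'} = t$ every $y' \neq y$ satisfies $n_{y'} \le t - n_y < t/2$, so its attackers dominate and $L(A_{y'}) = \lout$.

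The delicate step is this last identification of the domination threshold with the forest's decision. The encoding lumps all non-$y$ votes into a single attacker count, so the test $n_y > t - n_y$ compares $y$ against all competitors combined rather than against each one individually; matching this to the output rule is exact for binary classification, and in general the crux is to pin down how the forest's aggregation and its tie-breaking to $\bot$ align with the majority threshold enforced by the bi-stable semantics.
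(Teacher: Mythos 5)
Your reduction to vote counts is correct and mirrors the paper's own proof: the paper likewise invokes Lemma~\ref{lemma_bistable_rule_properties} to identify the $\lin$-labelled rule arguments with the active rules and then reads off the domination conditions. Your explicit bookkeeping with $n_y$ versus $t-n_y$, the absorption of the ``all attackers $\lout$'' clause into the case $n_y=t$, the exclusion of the tie $n_y=t-n_y$ under bi-stability, and your counting argument for the ``furthermore'' clause are all sound; up to that point your write-up is, if anything, more careful than the paper's.

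However, the step you defer as ``the crux'' is a genuine gap, and it cannot be closed as the statement stands. From $L(A_y)=\lin$ you get $n_y>t-n_y$, a strict majority, which does imply that $y$ is the unique plurality winner, so the direction $L(A_y)=\lin \Rightarrow \outputf_{\rf}(\cinput)=y$ goes through. The converse fails once $|\class|\geq 3$: the forest returns the class selected most frequently, with $\bot$ reserved for ties at the maximum, so a plurality without a majority still produces an output. Take three classes and $t=7$ trees with votes $3,2,2$; then $\outputf_{\rf}(\cinput)=y$ for the class $y$ with $3$ votes, yet $A_y$ has $3$ $\lin$-supporters against $4$ $\lin$-attackers, so its attackers dominate and the bi-complete conditions force $L(A_y)=\lout$ (and likewise $\lout$ for the other two class arguments). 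The resulting labelling is bi-stable, so it satisfies the hypotheses of the proposition while violating the ``only if'' direction: $n_y>t-n_y$ is strictly stronger than ``$y$ wins the vote.'' The paper's own proof glosses over exactly this point by asserting that $\outputf_{\rf}(\cinput)=y$ implies ``a majority of trees vote for $y$,'' which conflates plurality with strict majority, so your hesitation was well-placed; the identification you were asked to make is exact only for binary classification, or under an added assumption that the winning class always commands a strict majority of the trees.
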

\if\showproofs1
\begin{proof}
If $L(A_y) = \lin$, then all attacking arguments must
be $\lout$ or the supporters must dominate the attackers.
If all attackers are $\lout$, then no rule that picks
another class can be active in $\dt$ and so all active
rules must pick $y$. If the supporters dominate the 
attackers, then a majority of active rules (trees) 
vote for $y$.
Hence, in both cases $\outputf_{\rf}(\cinput) = y$
for all $\cinput \in S_L$.

Conversely, if $\outputf_{\rf}(\cinput) = y$
for all $\cinput \in S_L$,
then a majority of trees vote for $y$. The corresponding
rules will then support $A_y$ and dominate the attackers,
so that $L(A_y) = \lin$.

For the second claim, notice that 
$L(A_y) = \lin$ implies that a majority of $\lin$
rule features support $A_y$. However, by construction,
these rule features also attack all other class arguments.
Therefore, for these arguments, the attackers must
dominate the supporters, so that they will be labelled out.
\end{proof}
\fi
Proposition \ref{prop_bistable_class_properties} guarantees that
every bi-stable labelling $L$ represents a collection of 
inputs from the equivalence class
$S_L \in \dom /\! \equiv_\rf$
and the accepted class-arguments corresponds to their classification.
However, it is also possible that $L$ does not
accept any class argument.
As we show next, this is only possible
if the inputs in $S_L \in \dom /\! \equiv_\rf$ are ambiguous. Since all
inputs in an equivalence class are classified
equally, this is equivalent to showing
that for every input with $\outputf_{\rf}(\cinput) \neq \bot$,
there is a corresponding bi-stable
labelling $L_{\cinput}$ that represents it. 
$L_{\cinput}$ is defined as follows:
\begin{enumerate}
    \item a feature argument  $A_{X_i \in S_{i,j}} \in \args_F$ is labelled $\lin$
    if $X_i \in S_{i,j}$ and labelled $\lout$
    otherwise,
    \item a rule argument $A_{\dt, r} \in \args_R$ is labelled 
    $\lin$ if $r$ is the active rule in $\dt$ for $\cinput$ and labelled $\lout$ otherwise,
    \item a class argument is labelled $\lin$ if its supporters dominate its attackers, $\lout$ if its attackers dominate its supporters, and $\lundec$
    otherwise.
\end{enumerate}
The following lemma explains that
$L_{\cinput}$ is always a bi-complete labelling (Item 1) and  accepts at most one class argument (Item 2).

\begin{lemma}
\label{lemma_bistable_input_properties}
\begin{enumerate}
    \item For all $\cinput \in \dom$, $L_{\cinput} \in \labellings^c(\bag_{\rf})$ .
    \item There is at most one $y \in \class$ such that
    $L_{\cinput}(A_y) = \lin$. Furthermore, if $L_{\cinput}(A_y) = \lin$ for some $y \in \class$, then $L_{\cinput}(A_{y'}) = \lout$ for all $y' \in \class \setminus \{y\}$.
\end{enumerate}
\end{lemma}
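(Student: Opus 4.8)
The plan is to prove both items by checking the defining conditions of the bi-complete semantics argument-by-argument, working bottom-up through the three layers of $\bag_\rf$ (features, rules, classes), and then to settle Item 2 by a vote-counting argument on the class layer. Three structural facts drive everything: (i) for each feature $X_i$ exactly one feature argument is labelled $\lin$ by $L_\cinput$ (the one whose partition set contains $x_i$) and the rest $\lout$; (ii) since the rules of each tree are exhaustive and exclusive, each tree $\dt$ has exactly one active rule for $\cinput$, so exactly one rule argument in $\args_\dt$ is $\lin$ and the rest are $\lout$; and (iii) every feature literal has a constant truth value on each partition set $S_{i,j}$ (Definition \ref{def_dom_part}), so consistency of a feature constraint $X_i \in S_{i,j}$ with a literal $L$ is witnessed by any single point of $S_{i,j}$.

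For the feature layer (Item 1) the checks are immediate: a feature argument has no supporters and is attacked only by the other arguments of $\args_{X_i}$. The unique $\lin$ feature argument has all its attackers $\lout$, so its in-condition holds and, having no $\lin$ attacker, it cannot be dominated; each $\lout$ feature argument is attacked by that $\lin$ argument, so its attackers dominate its (empty) set of supporters. For the rule layer, recall that rule arguments have no supporters and are attacked only by feature arguments. If $r$ is the active rule in $\dt$ for $\cinput$, then $\cinput$ satisfies every literal of $\prem(r)$; by fact (iii) no $\lin$ feature argument (each corresponding to $\cinput$'s own partition set) can be inconsistent with $\prem(r)$, so all attackers of $A_{\dt,r}$ are $\lout$ and the in-condition holds. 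If $r$ is inactive, $\cinput$ violates some literal $L \in \prem(r)$ on a feature $X_i$; again by fact (iii) the $\lin$ feature argument for $X_i$ is inconsistent with $L$ and hence attacks $A_{\dt,r}$, so its attackers dominate its supporters and it is correctly $\lout$.

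The class layer is where the construction of $L_\cinput$ must be reconciled with the semantics. Write $n_y$ for the number of trees whose active rule concludes $y$; by facts (i)--(ii) the $\lin$ supporters of $A_y$ are exactly these $n_y$ active rules, while the $\lin$ attackers of $A_y$ are the $t - n_y$ active rules concluding some other class, and no rule argument is $\lundec$. Hence supporters dominate attackers iff $n_y > t - n_y$ and attackers dominate supporters iff $t - n_y > n_y$. The only point needing care is the disjunction in the bi-complete in-condition: ``all attackers of $A_y$ are $\lout$'' means $n_y = t$, which (as $t \ge 1$) already gives $n_y > t - n_y$, so this disjunct is subsumed by ``supporters dominate attackers''. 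Thus the three defining cases of the bi-complete semantics for $A_y$ coincide exactly with the three cases used to define $L_\cinput(A_y)$, proving Item 1 for class arguments.

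Item 2 then follows by counting. At most one class can satisfy the strict-majority condition $n_y > t/2$, so at most one $A_y$ is $\lin$. If $L_\cinput(A_y) = \lin$, then $n_y > t/2$ and, since the active rules assign exactly one class per tree, $\sum_{y'} n_{y'} = t$ forces $n_{y'} \le t - n_y < t/2$ for every $y' \ne y$; hence the $\lin$ attackers of $A_{y'}$ number $t - n_{y'} > t/2 > n_{y'}$ and dominate its supporters, so $L_\cinput(A_{y'}) = \lout$. I expect the main obstacle to be exactly the class-layer reconciliation above: noticing that, because a vote for $y$ registers as a supporter of $A_y$, the ``all attackers out'' branch of the in-condition never yields behaviour different from ``supporters dominate'', so that the ad hoc definition of $L_\cinput$ on class arguments is in fact the genuine bi-complete rule. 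The rule-layer argument, by contrast, is routine once the constant-truth property (iii) of partition sets is invoked.
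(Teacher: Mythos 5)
Your proof is correct and follows essentially the same route as the paper's: a layer-by-layer verification of the bi-complete conditions under $L_{\cinput}$ (features, then rules, then classes), with the class layer settled by the observation that every tree contributes an $\lin$-labelled active rule—so the ``all attackers out'' branch of the in-condition cannot diverge from the dominance-based definition of $L_{\cinput}$—and Item 2 by strict-majority vote counting. Your write-up is merely more explicit than the paper's (e.g., the constant-truth property of partition sets, which the paper invokes implicitly via Lemma \ref{lemma_bistable_rule_properties}, and the subsumption argument, which the paper phrases as the equal in-attacker/in-supporter count being nonzero), but the underlying argument is the same.
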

\if\showproofs1
\begin{proof}
1. We start by checking that the labelling of the 
feature arguments satisfies the definition of a
bi-complete labelling.
For every feature $X_i$, exactly one feature argument
$A_{X_i \in S_{i,j}} \in \args_{X_i}$ is $\lin$ because the feature conditions are mutually exclusive by definition. 
Since $A_{X_i \in S_{i,j}}$ is only attacked by other feature arguments
in $\args_{X_i}$ that are out, it must indeed be $\lin$. 
Since all other feature arguments in $\args_{X_i}$ 
are attacked by $A_{X_i \in S_{i,j}}$ they must indeed be $\lout$. 
Hence, the feature arguments are correctly labelled.

The fact that all rule arguments are correctly labelled
follows by the same arguments that we used in the proof 
of Lemma \ref{lemma_bistable_rule_properties}.

Finally consider a class argument $A_y$. 
Using the same arguments as in the proof of 
Proposition \ref{prop_bistable_class_properties},
we can check that $A_y$ is labelled $\lin$ ($\lout$)
if and only if its supporters (attackers)
dominate its supporters. 
If $L_{\cinput}(A_y) = \lundec$, the number of $\lin$-attackers equals the number of $\lin$-supporters.
The definition of bi-complete labellings could only be
violated if this number is zero. However, since every tree
must contain an active rule, the number must be non-zero.
Hence, $L_{\cinput}$ is a bi-complete labelling.

2. Suppose that $L_{\cinput}(A_y) = \lin$. Then the
number of rule arguments (active rules/ trees) that are
$\lin$ and support $A_y$ must be larger than the number 
of rule arguments that are $\lin$ and attack $A_y$. Hence,
for every other class label $y' \in \class \setminus \{y\}$,
the number of rule arguments that are
$\lin$ and attack $A_{y'}$ must be larger than the number 
of rule arguments that are $\lin$ and support $A_{y'}$.
Hence, $L_{\cinput}(A_{y'}) = \lout$.
\end{proof}
\fi
We can now show that our
encoding is complete in the sense that
$L_{\cinput}$ is a bi-stable labelling
($\cinput$ is represented by a bi-stable labelling)
if and only if $\outputf_{\rf}(\cinput) \neq \bot$. 
\begin{proposition}[Completeness]
\label{prop_arg_encoding_completeness}
For all inputs $\cinput \in \dom$,
$\outputf_{\rf}(\cinput) \neq \bot$
if and only if
$L_{\cinput} \in \labellings^s(\bag_{\rf})$.
\end{proposition}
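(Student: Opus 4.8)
The plan is to build on Lemma~\ref{lemma_bistable_input_properties}(1), which already guarantees that $L_{\cinput}$ is bi-complete for every $\cinput \in \dom$. Since a bi-complete labelling is bi-stable precisely when it assigns no $\lundec$ label, the entire statement reduces to locating where $L_{\cinput}$ can be undecided. By the definition of $L_{\cinput}$, every feature argument in $\args_F$ is labelled $\lin$ or $\lout$ (exactly one per feature is $\lin$), and every rule argument in $\args_R$ is labelled $\lin$ or $\lout$ (the active rule of each tree is $\lin$). Hence the only arguments that could carry the label $\lundec$ are the class arguments in $\args_\class$, and the biconditional becomes equivalent to: no $A_y \in \args_\class$ is labelled $\lundec$ if and only if $\outputf_{\rf}(\cinput) \neq \bot$.

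The next step I would take is to translate the labelling condition on a class argument into a vote count. Because all rule arguments are decided in $L_{\cinput}$, for each class $y$ the set $\{B \in \supporter(A_y) \mid L_{\cinput}(B) = \lin\}$ is exactly the set of trees whose active rule concludes $y$, so its size is the number $n_y$ of votes for $y$; symmetrically the $\lin$-attackers of $A_y$ number $t - n_y$, where $t$ is the number of trees. Substituting these into the two domination inequalities (where, since no rule is undecided, the $\neq \lout$ count equals the $\lin$ count) shows that $A_y$ is $\lin$ iff $n_y > t - n_y$, is $\lout$ iff $n_y < t - n_y$, and is $\lundec$ iff $n_y = t - n_y$. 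Thus a class argument is undecided exactly when its class receives half of all votes.

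For the ($\Rightarrow$) direction I would assume $\outputf_{\rf}(\cinput) = y \neq \bot$ and invoke the vote-counting fact that the forest's decision is carried by a strict majority of trees, so $n_y > t - n_y$; then $A_y$ is $\lin$ and, by Lemma~\ref{lemma_bistable_input_properties}(2), every other class argument is $\lout$, so none is undecided and $L_{\cinput}$ is bi-stable. For ($\Leftarrow$) I would assume $L_{\cinput}$ is bi-stable, so no class argument is undecided; by Lemma~\ref{lemma_bistable_input_properties}(2) at most one class argument is $\lin$, and I would argue that at least one must be $\lin$ (two classes cannot both be $\lout$, since $A_{y}$ being $\lout$ forces $n_{y} < t - n_{y}$ and these conditions are mutually exclusive for the top two vote-getters). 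Applying Proposition~\ref{prop_bistable_class_properties} to the now bi-stable $L_{\cinput}$ then identifies the unique $\lin$-class as the forest's output, giving $\outputf_{\rf}(\cinput) \neq \bot$.

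The main obstacle I anticipate is precisely this vote-counting translation and the treatment of the borderline case in which a class sits at exactly half the votes. Everything hinges on matching the domination criterion on class arguments against the forest's decision rule, and the correspondence is clean for binary classification, where a half-split is the only way to tie and ``no class undecided'' coincides with ``unique winner.'' The delicate point is to confirm that the equivalence still pins down $\bot$ correctly once more than two classes are present, i.e.\ that the plurality-style decision agrees with the majority-style domination test; this is the step where I would verify the vote arithmetic most carefully, as it is the one place the argument is not purely formal bookkeeping over the earlier lemmas.
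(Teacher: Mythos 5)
Your proposal follows the paper's own proof almost step for step: reduce via Lemma~\ref{lemma_bistable_input_properties}(1) to the question of whether any \emph{class} argument is undecided, translate the domination conditions into vote counts, and finish with Lemma~\ref{lemma_bistable_input_properties}(2) (the paper concludes directly rather than citing Proposition~\ref{prop_bistable_class_properties}, but that is cosmetic). Your explicit dictionary --- $A_y$ is $\lin$ iff $n_y > t - n_y$, $\lout$ iff $n_y < t - n_y$, $\lundec$ iff $n_y = t - n_y$ --- is exactly the arithmetic underlying the paper's argument, just made visible.

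However, the step you flag at the end as ``delicate'' is a genuine gap, and the patch you sketch does not work. In the ($\Rightarrow$) direction you assert that the forest's output class carries a strict majority, but the forest decides by \emph{plurality}: with three classes and votes $4$-$3$-$3$ among $t = 10$ trees, the winner has $n_y = 4 < 6 = t - n_y$, so $A_y$ is labelled $\lout$, not $\lin$ (the labelling happens to remain bi-stable here, but your argument for that conclusion breaks). In the ($\Leftarrow$) direction, your claim that at least one class argument must be $\lin$ --- justified by ``the top two vote-getters cannot both be $\lout$'' --- fails for three or more classes: with votes $4$-$4$-$4$ among $t = 12$ trees, every class satisfies $n_y = 4 < 8 = t - n_y$, so all class arguments are $\lout$, $L_{\cinput}$ is bi-stable, and yet $\outputf_{\rf}(\cinput) = \bot$. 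So the equivalence you hoped to verify (``no class at exactly half the votes'' iff ``unique plurality winner'') is simply false beyond binary classification. To be fair, the paper's own proof glosses over precisely the same point --- it silently equates ``selected most frequently'' with ``majority'' in both directions --- so you have in effect located a real weakness of the published argument; but since your attempt neither closes this gap nor restricts the claim (e.g., to binary classification, where plurality and majority coincide and both directions go through cleanly), it cannot be counted as a complete proof.
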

\if\showproofs1
\begin{proof}
Item 1 of Lemma \ref{lemma_bistable_input_properties}
guarantees that $L_{\cinput}$ is bi-complete.
By construction of $L_{\cinput}$, 
no feature and
rule arguments are labelled undecided.
Hence, the 
only case in which $L_{\cinput}$ is not a
bi-stable labelling is the case in which 
it labels a class argument undecided. 

If $L_{\cinput}$ is
bi-stable, there must be a class label $y \in \class$ such that the
number of rule arguments (active rules/ trees) that are
$\lin$ and support $A_y$ is larger than the number 
of rule arguments that are $\lin$ and attack $A_y$.
Hence, a majority of trees support $y$ and 
$\outputf_{\rf}(\cinput) = y \neq \bot$.
Conversely, if $\outputf_{\rf}(\cinput) = y \neq \bot$,
a majority of trees support $y$ and the corresponding 
active rule arguments that support $A_y$ will dominate 
$A_y$'s attackers. Hence, $L_{\cinput}(A_y) =\lin$ and
according to item 2 of Lemma \ref{lemma_bistable_input_properties}, $L_{\cinput}(A_{y'}) =\lout$
for the remaining class arguments. Hence, 
$L_{\cinput}$ is bi-stable.
\end{proof}
\fi

\subsection{Applications of the Explanation BAG}

Now that we established the formal relationship
between $\bag_{\rf}$ and  $\rf$, we can
use it to reduce questions about
$\rf$ to argumentation problems in $\bag_{\rf}$.
To begin with, we note that counting
the ambiguous equivalence classes of $\rf$
can be reduced to counting the bi-stable 
labellings of $\bag_{\rf}$.
\begin{proposition}
\label{prop_count_ambigous_vs_labellings}
$|\ambSet(\rf) /\! \equiv_\rf| = |\dom| - |\labellings^s(\bag_{\rf})|$.
\end{proposition}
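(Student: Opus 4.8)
The plan is to exhibit an explicit bijection between the bi-stable labellings of $\bag_{\rf}$ and the \emph{non-ambiguous} equivalence classes in $\dom /\! \equiv_\rf$, and then read off the count by subtracting the ambiguous classes from the total. First I would record that ambiguity is a property of an entire equivalence class rather than of individual inputs: since all inputs in a class $E \in \dom /\! \equiv_\rf$ are indistinguishable (Def.~\ref{def_indisting_rel}) and hence classified identically, either every $\cinput \in E$ satisfies $\outputf_{\rf}(\cinput) = \bot$ or none does. Thus $\dom /\! \equiv_\rf$ splits into the ambiguous classes, whose union is exactly $\ambSet(\rf)$, and the non-ambiguous ones, giving $|\dom /\! \equiv_\rf| = |\ambSet(\rf) /\! \equiv_\rf| + N$, where $N$ denotes the number of non-ambiguous classes.

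Next I would set up the two maps. In one direction, send a non-ambiguous class $E$ to the labelling $L_{\cinput}$ for an arbitrary $\cinput \in E$. The key preliminary observation is that $L_{\cinput}$ depends only on $E$ and not on the chosen representative: its feature labels depend only on which partitioning set $S_{i,j}$ contains $x_i$, its rule labels depend only on the active rules (which are constant across $E$ by indistinguishability, cf.\ Lemma~\ref{lemma_bistable_rule_properties}), and its class labels are then forced by the bi-complete conditions. By Proposition~\ref{prop_arg_encoding_completeness}, $L_{\cinput}$ is bi-stable precisely when $\outputf_{\rf}(\cinput) \neq \bot$, so for non-ambiguous $E$ this map lands in $\labellings^s(\bag_{\rf})$. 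In the other direction, send a bi-stable labelling $L$ to the class $S_L$ associated with it via item~2 of Lemma~\ref{lemma_bistable_feature_properties}.

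To show these maps are mutually inverse, the central step is that every bi-stable labelling $L$ coincides with $L_{\cinput}$ for any $\cinput \in S_L$. Indeed, by Lemma~\ref{lemma_bistable_feature_properties} the feature arguments accepted by $L$ are exactly those defining $S_L$, matching $L_{\cinput}$; by Lemma~\ref{lemma_bistable_rule_properties} the accepted rule arguments are exactly the active rules for inputs in $S_L$, again matching $L_{\cinput}$; and once the feature and rule layers agree, the bi-complete conditions leave no freedom in the class layer, so $L = L_{\cinput}$. This simultaneously shows that $S_L$ is non-ambiguous, since $L = L_{\cinput}$ is bi-stable and therefore $\outputf_{\rf}(\cinput) \neq \bot$ by Proposition~\ref{prop_arg_encoding_completeness}, and that the two maps invert each other, as $S_{L_{\cinput}} = E$ by construction of $S_L$ from the accepted feature arguments.

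I expect the main obstacle to be exactly this rigidity argument: verifying that a bi-stable labelling is uniquely determined by the equivalence class it accepts, so that there is \emph{exactly one} bi-stable labelling per non-ambiguous class rather than several. It rests on combining Lemmas~\ref{lemma_bistable_feature_properties} and~\ref{lemma_bistable_rule_properties}, which pin down the feature and rule layers, with the observation that the class layer carries no residual freedom under the bi-complete constraints. With the bijection established, we obtain $|\labellings^s(\bag_{\rf})| = N = |\dom /\! \equiv_\rf| - |\ambSet(\rf) /\! \equiv_\rf|$, and rearranging yields the claimed identity (using that $|\dom|$ here counts equivalence classes, which coincides with the number of inputs when all domains are finite).
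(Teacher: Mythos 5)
Your proposal is correct and follows essentially the same route as the paper: the paper's one-line proof invokes correctness and completeness to identify $|\labellings^s(\bag_{\rf})|$ with the number of non-ambiguous equivalence classes and then rearranges, which is exactly the correspondence you establish. The only difference is one of detail — you make explicit the bijection $E \mapsto L_{\cinput}$, $L \mapsto S_L$ and the rigidity argument showing each non-ambiguous class yields exactly one bi-stable labelling, points the paper leaves implicit in its appeal to Propositions~\ref{prop_bistable_class_properties} and~\ref{prop_arg_encoding_completeness}.
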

\if\showproofs1
\begin{proof}
By correctness and completeness of our
encoding, 
$|\labellings^s(\bag_{\rf})|$ is the number
of non-ambiguous equivalence classes, 
that is, $|\ambSet(\rf) /\! \equiv_\rf| + |\labellings^s(\bag_{\rf})| = |\dom|$.
Reordering terms gives the claim.
\end{proof}
\fi
Two interesting argumentative reasoning 
problems that are relevant for explainable AI 
are
finding sufficient and necessary reasons for
the acceptance of arguments \cite{BorgB21}.
A set of arguments
$S$ is a \emph{sufficient reason}
for an argument $A$ if for all
labellings $L$, $A$ is accepted by $L$
whenever $S$ is accepted by $L$.
A set of arguments $N$ is a 
\emph{necessary reason} for $A$ if
$L$ accepts $A$ only if it also accepts
$N$. We will consider sufficient and
necessary reasons with respect to bi-stable labellings here.
Note that a set of feature
arguments
$\{A_{X_{i_1} \in S_{{i_1},j_1}},
\dots,
A_{X_{i_k} \in S_{{i_k},j_k}}\}$
is a (minimal) sufficient
reason for a class argument $A_y$ in $\bag_{\rf}$
if and only if every partial assignment
from 
$S_{{i_1},j_1} \times
\dots \times S_{{i_k},j_k}$
is a wAXp (AXp) for $y$ in $\rf$. 
\begin{example}
For the explanation BAG corresponding to
Figure \ref{fig:exp_rf},
the set of feature arguments 
$\{A_{B \in \{1\}}, A_{Age \in (-\infty,35]}\}$
is a minimal sufficient reason for 
$A_{Pos}$. This means that every 
partial assignment of the form
$(B=1, Age=x)$, where $x\leq 35$, is an AXp
for the random forest.
\end{example}
Similarly, if 
$\{A_{X_{i_1} \in S_{{i_1},j_1}},
\dots,
A_{X_{i_k} \in S_{{i_k},j_k}}\}$
is a necessary reason for $A_y$,
then $\rf$ can only classify an input
as $y$ if the input is an extension of
one of the partial assignments from 
$S_{{i_1},j_1} \times
\dots \times S_{{i_k},j_k}$. 
\begin{example}
For Figure \ref{fig:exp_rf},  
the feature argument $A_{A \in \{0\}}$ is necessary for $A_{Neg}$ because if $A=1$,
the first tree will vote for $Pos$, so
that the output of $\rf$ is either $Pos$ or $\bot$.
\end{example}
The following proposition allows us to
construct necessary feature arguments bottom-up.
\begin{proposition}
\label{prop_nec_reason_bottom_up}
If $N \subseteq \args_{\rf}$ is necessary for
$A_y$, then all 
$A \in N$ are necessary for 
$A_y$.
\end{proposition}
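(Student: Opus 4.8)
The plan is to reduce the claim to a routine unfolding of the definition of a necessary reason, followed by instantiation of a universally quantified statement. Recall that, working throughout with respect to bi-stable labellings, $N$ being necessary for $A_y$ means that every $L \in \labellings^s(\bag_{\rf})$ with $L(A_y) = \lin$ also accepts $N$, where accepting a \emph{set} $N$ is the conjunctive condition $L(A') = \lin$ for every $A' \in N$. Spelled out, the hypothesis is the nested universal statement: for all $L \in \labellings^s(\bag_{\rf})$ and all $A' \in N$, if $L(A_y) = \lin$ then $L(A') = \lin$. The target conclusion, that each individual $A \in N$ is necessary for $A_y$, is precisely what one obtains by fixing $A$ first: for all $L$, if $L(A_y) = \lin$ then $L(A) = \lin$. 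So the two are related by a valid reordering of universal quantifiers together with an instantiation.

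Concretely, I would first fix an arbitrary $A \in N$ (identifying ``$A$ is necessary'' with ``$\{A\}$ is necessary'' by applying the set-level definition to the singleton $\{A\}$). Then I would take an arbitrary $L \in \labellings^s(\bag_{\rf})$ with $L(A_y) = \lin$; since $N$ is necessary for $A_y$, the labelling $L$ accepts the whole set $N$, and hence in particular $L(A) = \lin$ because $A \in N$. As $L$ was arbitrary, $A$ is necessary for $A_y$, and as $A \in N$ was arbitrary this holds for every element of $N$. The entire content is the instantiation of a universally quantified conjunction at one of its conjuncts.

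The one point that deserves explicit care, and the reason the statement is worth recording, is the conjunctive reading of ``$L$ accepts $N$'': it demands that $L$ label \emph{every} member of $N$ as $\lin$, not merely some member. Under that reading necessity descends monotonically to singletons, which is exactly what licenses building necessary feature arguments bottom-up. I would also note that the degenerate case in which no bi-stable labelling accepts $A_y$ causes no trouble: there the hypothesis holds vacuously, as does each conclusion, so every argument is trivially necessary for $A_y$. I expect no genuine obstacle here; the only real risk is notational, namely conflating the argument $A$ with the set $\{A\}$, which the singleton identification resolves.
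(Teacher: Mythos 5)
Your proposal is correct and matches the paper's own proof exactly: both fix an arbitrary bi-stable labelling accepting $A_y$, invoke necessity of $N$ to conclude the labelling accepts all of $N$, and then instantiate at the element $A \in N$. The extra remarks on the conjunctive reading of acceptance and the vacuous case are fine but not needed beyond the paper's argument.
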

\if\showproofs1
\begin{proof}
Consider an arbitrary labelling $L$ that accepts
$A_y$. Then $L$ also accepts $N$ because $N$
is necessary for $A_y$. Since $A \in N$,
$L$ also accepts $A$. Hence, $A$ is also
necessary for accepting $A_y$.
\end{proof}
\fi
This suggests the following algorithm for 
finding all necessary feature arguments.
For every class argument $A_y$ and every 
feature argument $A_{X_i \in S_{i,j}}$, test 
if $A_{X_i \in S_{i,j}}$ is necessary for $A_y$.
The union of all these feature arguments is 
then the maximal necessary reason among the
feature arguments and we can find it with a linear number of atomic necessity checks. However, deciding 
if a feature argument is necessary for a 
class argument, may be a
difficult problem itself. The problem is in
$CoNP$ because a counterexample for the
necessity of a candidate can be verified efficiently, but
we currently do not know a lower bound for the
complexity.

\section{Markov Network Representation}

We can reduce many combinatorial tasks in argumentation graphs 
to probabilistic queries in Markov networks \cite{potyka_abstract_2020}. 
The reduction 
also allows us to generalize the idea
of necessary and sufficient reasons to
$\delta$-sufficient and $\delta$-necessary
reasons similar to the idea of $\delta$-AXps.

Intuitively, Markov networks
decompose a large probabilistic 
model $P$ into smaller local models
\cite{koller_probabilistic_2009}.
We denote random variables by capital letters $U, V, W$ and
values of these random variables by small letters $u, v, w$.
Bold capital letters $\bX, \bY, \bZ$ denote ordered sequences of random variables
and bold small letters $\bx, \by, \bz$ denote assignments to these random variables.
For example, if $\bX = (U_1, U_2, U_3)$ and $\bx = (u_1, u_2, u_3)$, then $\bX = \bx$ denotes the assignment $(U_1 = u_1, U_2 = u_2, U_3 = u_3)$.
We write $\bY \subseteq \bX$ if the random variables in $\bY$ form a subset of the random variables in $\bX$.
If $\bY \subseteq \bX$, we denote by $\bX|_{\bY}$ and $\bx|_{\bY}$ the restriction of $\bX$
and $\bx$ to the random variables in $\bY$.
For example, if $\bY = (U_1, U_3)$, we have
$\bX|_{\bY} = (U_1, U_3)$ and $\bx|_{\bY} = (u_1, u_3)$. We 
consider three types of random variables
in our application.
\begin{definition}[Explanation Random Variables] The random variables associated with $\rf$ are defined as follows:
\begin{itemize}
    \item For every feature $X_i$, we introduce a \emph{feature variable}  $U_i$ that can take values from  
    $\{S_{i,j_1}, \dots, S_{i,n_i}\}$
    (the partitioning sets of the 
    feature domain from Def. \ref{def_dom_part}).
\item For every tree $\dt = \{r_1, \dots, r_k\}$, we introduce a \emph{tree variable} 
    $U_{\dt}$ that can take values from $\{r_1, \dots, r_k\}$.
\item We introduce a \emph{class variable} 
$U_{\class}$ that can take values from $\class$.
\end{itemize}
\end{definition}
A \emph{factor} with scope $\bY \subseteq \bX$ is a function $\phi(\bY)$ that maps every assignment $\by$
to $\bY$ to a non-negative real number.  Intuitively, factors can increase or
decrease the probability of variable assignments.
Given a set of factors $\Phi = \{\phi_1(\bX_1), \dots, \phi_k(\bX_k)\}$, $\bX_i \subseteq \bX$, we define the 
\emph{plausibility of a state} of $\bX$ via
$$\mathrm{Pl}_\Phi(\bX) = \prod_{i=1}^k \phi_i(\bX|_{\bX_i}).$$
By normalizing the plausibility, we obtain a probability 
distribution that is called
the \emph{Gibbs distribution} over $\bX$:
$$P_\Phi(\bX) = \frac{1}{Z} \mathrm{Pl}_\Phi(\bX),$$
where the normalization constant 
$Z = \sum_{\bx} \mathrm{Pl}_\Phi(\bx)$
guarantees that the probabilities add up to $1$. 
$Z$ is also called the \emph{partition function}.

In our application, we build up the Gibbs distribution from
two types of factors. Intuitively,
the first one simulates the individual
tree decisions based on the state of
the feature constraints and the second 
one simulates the decision making process
of the random forest based on the tree
decisions.
\begin{definition}[Explanation Factors]
The factors associated with $\rf$ are defined as follows:
\end{definition}
\begin{itemize}
    \item For every tree $\dt \in \rf$, there is
    a tree factor $\phi_{\dt}(\bX_{\dt})$,
    where $\bX_{\dt}$ contains the
    tree variable $U_{\dt}$ and for
    each feature $X_i$ used in $\dt$,
    the corresponding feature variable $U_i$. 
    $\phi_{\dt}(\bX_{\dt})$ is a tree-factor \cite{koller_probabilistic_2009}
    defined as follows:
    given a variable assignment $\bx_{\dt}$,
    $\phi_{\dt}(\bX_{\dt})$ computes the 
    active rule
    $r$ for the assignment of the feature
    variables and returns $1$ if $r$ is
    assigned to $U_{\dt}$ and $0$
    otherwise.
\item There is one class factor
$\phi_{\class}(\bX_{\class})$, where 
$\bX_{\class}$ contains the class 
variable $U_{\class}$ and all tree variables. $\phi_{\class}(\bX_{\class})$ is defined as a 
deterministic factor \cite{koller_probabilistic_2009}
defined as follows:
Given a variable assignment $\bx_{\class}$,
$\phi_{\class}(\bX_{\class})$ iterates over 
the tree variables
and counts for every class the number
of rules that vote for the class.
It then returns $1$ if the class 
assigned to $U_{\class}$ has a larger
number of votes than all other classes
and $0$ otherwise.
\end{itemize}
The factors define the explanation plausibility distribution
and the corresponding Gibbs distribution for $\rf$.
\begin{definition}
Given a random forest $\rf$, 
the associated \emph{explanation plausibility distribution} for $\rf$ is 
$$\mathrm{Pl}_{\rf}(\bX) =
\phi_\class(\bX|_{\bX_{\class}}) \cdot
\prod_{\dt \in \rf} \phi_\dt(\bX|_{\bX_{\dt}})$$
and the \emph{explanation Gibbs distribution}
is 
$$P_{\rf}(\bX) = \frac{1}{Z} \mathrm{Pl}_{\rf}(\bX).$$
\end{definition}
Although $P_{\rf}(\bX)$ is motivated by the explanation BAG,
we can construct it immediately from $\rf$. To do this,
we traverse all trees to create the domains
of the random variables, translate the decision trees into 
tree factors and create the class factor. This can almost be done in linear time, but as we 
need to order the threshold values of continuous
features for the domain partition, there can be a
log-linear blowup.
As the explanation plausibility distribution is just the 
product of the factors, we can generate it in log-linear time.
\begin{proposition}
The explanation plausibility distribution 
$\mathrm{Pl}_{\rf}(\bX)$
can be generated from $\rf$ in log-linear time.
\end{proposition}
Building up the Gibbs distribution probably requires
exponential time as it involves computing the normalization
constant $Z$. However, we will exploit 
the fact that the plausibility distribution can be used to design
sampling algorithms to approximate $Z$ and queries to the Gibbs distribution.

\subsection{Explanation Queries}

Before going into the sampling algorithms,
let us explain what we can learn from the normalization
constant and probabilities from the Gibbs distribution.
We keep exploiting the fact that bi-stable labellings
correspond to non-ambiguous inputs for $\rf$
(Proposition \ref{prop_arg_encoding_completeness}).
To do so, we associate every input $\bx$ for $P_\rf(\bX)$
with a labelling $L_{\bx}$ as follows:
\begin{enumerate}
    \item a feature argument $A_{X_i \in S^i_j} \in \args_F$ is labelled $\lin$ if $U_i = S^i_j$ and labelled $\lout$
    otherwise,
    \item a rule argument $A_{\dt, r} \in \args_R$ is labelled 
    $\lin$ if $U_\dt = r$ and labelled $\lout$ otherwise,
    \item a class argument $A_y$ is labelled $\lin$ if 
    $U_\class = y$ and labelled $\lout$ otherwise.
\end{enumerate}
Let us first observe that the plausibility of every input for  $P_\rf(\bX)$ 
is either $0$ or $1$ and it is non-zero if and only if it represents a bi-stable labelling
of the explanation BAG.
\begin{proposition}
\label{prop_plausibility}
For every assignment $\bx$ to $\mathrm{Pl}_{\rf}(\bX)$,
we have $\mathrm{Pl}_{\rf}(\bx) \in \{0,1\}$.
Furthermore, $\mathrm{Pl}_{\rf}(\bx) \neq 0$ if and only if 
$L_{\bx}$ is a bi-stable labelling of the explanation BAG.
\end{proposition}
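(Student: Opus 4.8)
The plan is to first settle the value claim from the factor definitions and then prove the biconditional by identifying $L_{\bx}$ with the input-based labelling $L_{\cinput}$ of the completeness subsection.

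First I would note that both factor types are $\{0,1\}$-valued: by definition $\phi_{\dt}(\bx|_{\bX_{\dt}})$ returns $1$ exactly when the value assigned to $U_{\dt}$ is the active rule for the feature assignment of $\bx$ and $0$ otherwise, and $\phi_{\class}(\bx|_{\bX_{\class}})$ returns $1$ exactly when the class assigned to $U_{\class}$ has strictly more votes than every other class and $0$ otherwise. Since $\mathrm{Pl}_{\rf}(\bx)$ is the product of these factors and a product of numbers in $\{0,1\}$ again lies in $\{0,1\}$, the first claim follows. Moreover, $\mathrm{Pl}_{\rf}(\bx) \neq 0$ holds if and only if every factor equals $1$, that is, if and only if (a) for each tree $\dt$ the value $U_{\dt}$ is the active rule for the feature assignment of $\bx$, and (b) $U_{\class}$ is the unique plurality winner of the votes $\conc(U_{\dt})$, $\dt \in \rf$.

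For the forward direction I would fix a representative input $\cinput$ in the (nonempty) equivalence class $\prod_i U_i \in \dom /\! \equiv_{\rf}$ determined by the feature assignment of $\bx$ and argue that $L_{\bx}$ equals $L_{\cinput}$ argument by argument. The feature-argument labels coincide because both are fixed by which partition set contains the feature value; the rule-argument labels coincide by (a), since $L_{\cinput}$ marks precisely the active rule of each tree as $\lin$; and the class-argument labels coincide because, by (a), the tree decisions on $\cinput$ are exactly the votes $\conc(U_{\dt})$, so (b) yields $\outputf_{\rf}(\cinput) = U_{\class} \neq \bot$, whence the second part of Proposition \ref{prop_bistable_class_properties} forces $L_{\cinput}(A_{U_{\class}}) = \lin$ and $L_{\cinput}(A_{y'}) = \lout$ for $y' \neq U_{\class}$. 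Proposition \ref{prop_arg_encoding_completeness} then gives $L_{\bx} = L_{\cinput} \in \labellings^s(\bag_{\rf})$.

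For the converse I would start from a bi-stable $L_{\bx}$. Lemma \ref{lemma_bistable_rule_properties} identifies, in each tree, the unique $\lin$ rule argument (here $A_{\dt, U_{\dt}}$) as the active rule, yielding (a); and since $L_{\bx}$ labels exactly $A_{U_{\class}}$ as $\lin$ among the class arguments, Proposition \ref{prop_bistable_class_properties} gives $\outputf_{\rf}(\cinput) = U_{\class}$ for $\cinput$ in the associated equivalence class, i.e. $U_{\class}$ is the plurality winner, yielding (b). Hence all factors equal $1$ and $\mathrm{Pl}_{\rf}(\bx) = 1 \neq 0$. I expect the main obstacle to be the class-argument correspondence: $L_{\bx}$ fixes class labels directly from $U_{\class}$, whereas $L_{\cinput}$ fixes them by domination, so the equivalence must be routed through the RF decision rule and the already-established Propositions \ref{prop_bistable_class_properties} and \ref{prop_arg_encoding_completeness} rather than recomputing domination counts by hand; the feature- and rule-argument checks, by contrast, are routine and mirror Lemmas \ref{lemma_bistable_feature_properties} and \ref{lemma_bistable_rule_properties}.
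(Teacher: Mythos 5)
Your proof is correct, but it takes a genuinely different route from the paper's. The paper argues directly from the definitions: for the direction $\mathrm{Pl}_{\rf}(\bx) \neq 0 \Rightarrow L_{\bx} \in \labellings^s(\bag_{\rf})$ it re-runs the verification from item 1 of the proof of Lemma \ref{lemma_bistable_input_properties} on $L_{\bx}$, and for the other direction it argues by contrapositive: a zero tree factor means the active rule's argument is labelled $\lout$ even though all of its attackers are $\lout$ (so bi-stability fails), and a zero class factor means an $\lin$-labelled class argument whose supporters do not dominate its attackers. You instead reduce everything to the already-established faithfulness results: you observe that $\mathrm{Pl}_{\rf}(\bx) \neq 0$ iff every factor equals $1$ iff (a) each $U_{\dt}$ is the active rule and (b) $U_{\class}$ is the strict plurality winner, then identify $L_{\bx}$ with $L_{\cinput}$ for a representative $\cinput$ of the equivalence class fixed by the feature assignment, and invoke Lemma \ref{lemma_bistable_rule_properties} and Propositions \ref{prop_bistable_class_properties} and \ref{prop_arg_encoding_completeness}. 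This is non-circular (none of those results depends on the present proposition) and buys modularity: no domination counts are recomputed, and the characterization of nonzero plausibility in terms of (a) and (b) is made explicit. What the paper's direct argument buys is self-containedness, and its contrapositive pinpoints exactly which labelling condition breaks when a given factor vanishes. One presentational fix you should make: in your forward direction you cite the second part of Proposition \ref{prop_bistable_class_properties} for $L_{\cinput}$ before you have established that $L_{\cinput}$ is bi-stable, yet that proposition assumes $L \in \labellings^s(\bag_{\rf})$; since you have already derived $\outputf_{\rf}(\cinput) = U_{\class} \neq \bot$ at that point, you should first apply Proposition \ref{prop_arg_encoding_completeness} to conclude $L_{\cinput} \in \labellings^s(\bag_{\rf})$ and only then use Proposition \ref{prop_bistable_class_properties} to pin down the class labels --- all the ingredients are present, only the order of citation needs swapping.
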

\if\showproofs1
\begin{proof}
For the first claim, observe that every factor returns either $0$ or $1$.
Since $\mathrm{Pl}_{\rf}(\bx)$ is the product of the returned numbers, it must
be either $0$ or $1$.

For the second claim, first assume that $\mathrm{Pl}_{\rf}(\bx) \neq 0$.
Note that, by definition, $L_{\bx}$ accepts exactly 
one feature argument per feature, one rule argument per tree
and one class argument. Using the same line of reasoning as in item 1 of the proof of Lemma 
\ref{lemma_bistable_input_properties},
we can check that $L_{\bx}$ is bi-stable.

Now assume that $\mathrm{Pl}_{\rf}(\bx) = 0$.
This can only happen if one factor returns $0$.
If a tree factor $\phi_{\dt}(\bX_{\dt})$ returns $0$, 
then $U_{\dt} \neq r$, where $r$ is the active rule
for the assignment of the feature variables. 
Hence, $L_{\bx}(A_{\dt,r}) = \lout$.
However, since $r$ is the active rule, all feature variables
are assigned feature constraints that are consistent with
the rule. Hence, the feature arguments that are in are all
consistent with $A_{\dt,r}$. Hence, all attackers of $A_{\dt,r}$
and must be labelled $\lin$ by a bi-stable labelling.
Hence, $L_{\bx}$ is not a bi-stable labelling.

Similarly, if the class factor $\phi_{\class}(\bX_{\class})$
returns $0$, then the class $y$ assigned to $U_\class$ does 
not win the majority vote. By definition of
$L_{\bx}$, we have $L_{\bx}(A_y) = \lin$, but $A_y$'s supporters
do not dominate its attackers (for otherwise, $y$ would have 
won the majority vote). Hence, $L_{\bx}$ is again not a 
bi-stable labelling.
\end{proof}
\fi
This relationship allows us to
connect the partition function $Z$
to the number of bi-stable labellings
(non-ambiguous inputs) and probabilistic queries to
generalizations  of 
sufficient and necessary reasons.
We say that a set of arguments
$S$ is a $\delta$-\emph{sufficient reason}
for an argument $A$ if among the labellings 
that accept $S$, $\delta \cdot 100$ \%
also accept $A$.
Similarly, $N$ is a 
$\delta$-\emph{necessary reason} for $A$ if
among the labellings that accept $A$, $\delta \cdot 100$ \% also accept $S$. Note that
$1$-sufficient ($1$-necessary) reasons are
just sufficient (necessary) reasons. 
Furthermore, if all features are categorical,
then $\{A_{X_{i_1} \in S_{{i_1},j_1}},
\dots,
A_{X_{i_k} \in S_{{i_k},j_k}}\}$
is a $\delta$-sufficient
reason for a class argument $A_y$ in $\bag_{\rf}$
if and only if every partial assignment
from 
$S_{{i_1},j_1} \times
\dots \times S_{{i_k},j_k}$
is a $\delta$-AXp for $y$ in $\rf$. 
If we have continuous features, this may
not be the case for $\delta \neq 1$ because
the indistinguishability relation does not
necessarily partition the domains of continuous
features into equivalence classes of equal size.

In the next proposition, we use the
following notation:
Given an assignment $\by_F$ to a subsequence of feature random variables $\bY_F$, we let  
$S_{\by_F}$ denote the corresponding set
of feature arguments that contains
the feature argument $A_{X_i \in S^i_j}$ 
if and only if $\by_F$ assigns $U_i = S^i_j$.

\begin{proposition}
\label{prop_MN_queries}
\begin{enumerate}
    \item If all features are categorical, then $Z = |\dom| - |\ambSet(\rf)|  = |\labellings^s(\bag_{\rf})|$ is the number of equivalence classes of non-ambiguous inputs for $\rf$.
    \item Let $\bY_F$ be a subsequence of feature random variables. Then
    $\mathrm{Pl}_{\rf}(C=y, \by_F) = \frac{N_{(C=y, \by_F)}}{Z},$ where
    $N_{(C=y, \by_F)}$ is the number of
    bi-stable labellings that accept all
    arguments in $S_{\by_F} \cup \{A_y\}$.
    \item $\mathrm{P}_{\rf}(C=y \mid \by_F) = \delta$ if and only if
    $S_{\by_F}$ is a $\delta$-sufficient
    reason for $A_y$.
    \item $\mathrm{P}_{\rf}(\by_F \mid C=y) = \delta$ if and only if 
    $S_{\by_F}$ is a $\delta$-necessary
    reason for $A_y$.
\end{enumerate}
\end{proposition}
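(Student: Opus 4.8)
The plan is to reduce all four claims to Proposition~\ref{prop_plausibility} together with one combinatorial observation: the map $\bx \mapsto L_{\bx}$ restricts to a bijection between the full assignments of nonzero plausibility and the bi-stable labellings of $\bag_\rf$. Injectivity is immediate, since $L_{\bx}$ records, for every feature, tree, and class variable, exactly which feature, rule, and class argument is labelled $\lin$, so distinct assignments give distinct labellings. For surjectivity I would take a bi-stable $L$ and read off an assignment: by Lemma~\ref{lemma_bistable_feature_properties}(2) and Lemma~\ref{lemma_bistable_rule_properties} it accepts exactly one feature argument per feature and one rule argument per tree, and by Proposition~\ref{prop_bistable_class_properties} (using that bi-stable labellings leave no argument $\lundec$) exactly one class argument; setting the variables accordingly yields $\bx$ with $L_{\bx}=L$, and $\mathrm{Pl}_{\rf}(\bx)=1$ by Proposition~\ref{prop_plausibility}.

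Given this bijection, Part~1 is a direct count: $Z=\sum_{\bx}\mathrm{Pl}_{\rf}(\bx)$ equals the number of assignments of plausibility $1$, hence $|\labellings^s(\bag_{\rf})|$. In the purely categorical case every partition set is a singleton (Def.~\ref{def_dom_part}), so each equivalence class in $\dom/\!\equiv_\rf$ contains a single input; correctness and completeness (Propositions~\ref{prop_bistable_class_properties} and~\ref{prop_arg_encoding_completeness}) then identify bi-stable labellings with non-ambiguous inputs one-to-one, and Proposition~\ref{prop_count_ambigous_vs_labellings} gives $|\labellings^s(\bag_{\rf})| = |\dom| - |\ambSet(\rf)|$.

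For Part~2 I would marginalize the distribution over the variables not fixed by $(C=y,\by_F)$. Because every factor is $0/1$-valued, the marginal plausibility of the partial assignment counts precisely the full assignments extending it that have plausibility $1$; by Proposition~\ref{prop_plausibility} and the bijection above, these are exactly the bi-stable labellings accepting every argument in $S_{\by_F}\cup\{A_y\}$, i.e.\ $N_{(C=y,\by_F)}$. Dividing by the partition function $Z$ yields the stated Gibbs marginal. Parts~3 and~4 then follow by writing the conditional as a ratio of marginals. For Part~3, $\mathrm{P}_{\rf}(C=y\mid\by_F)=\mathrm{P}_{\rf}(C=y,\by_F)/\mathrm{P}_{\rf}(\by_F)=N_{(C=y,\by_F)}/N_{\by_F}$, where $N_{\by_F}$ counts the bi-stable labellings accepting $S_{\by_F}$; since every bi-stable labelling accepts exactly one class argument, $N_{\by_F}=\sum_{y'}N_{(C=y',\by_F)}$, so the ratio is precisely the fraction of labellings accepting $S_{\by_F}$ that also accept $A_y$, which is the definition of a $\delta$-sufficient reason. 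Symmetrically, Part~4 uses $\mathrm{P}_{\rf}(\by_F\mid C=y)=N_{(C=y,\by_F)}/N_{C=y}$ with $N_{C=y}$ the number of bi-stable labellings accepting $A_y$, matching the definition of a $\delta$-necessary reason.

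The routine content here is the conditional-probability bookkeeping; the step that needs care --- and the main obstacle --- is the bijection and the claim that the marginals count labellings correctly. In particular one must verify that each bi-stable labelling accepts exactly one class argument, so that summing $N_{(C=y',\by_F)}$ over classes recovers $N_{\by_F}$ without over- or under-counting. This is exactly where Proposition~\ref{prop_bistable_class_properties} and the property that bi-stable labellings leave no argument $\lundec$ are essential.
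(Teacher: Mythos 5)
Your overall route is the same as the paper's: reduce everything to Proposition~\ref{prop_plausibility}, read $Z$ and the marginals as counts of bi-stable labellings, and finish Parts~3--4 by the conditional-probability ratio. Part~2 is indeed unproblematic: for labellings that accept $A_y$, the correspondence $\bx \mapsto L_{\bx}$ is a genuine bijection, since Lemmas~\ref{lemma_bistable_feature_properties} and~\ref{lemma_bistable_rule_properties} together with the second claim of Proposition~\ref{prop_bistable_class_properties} force any bi-stable $L$ accepting $A_y$ to equal $L_{\bx}$ for the assignment read off from $L$.

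The gap is at the exact point you identify as the crux, and the justification you give for it does not hold. You claim every bi-stable labelling accepts exactly one class argument ``by Proposition~\ref{prop_bistable_class_properties}, using that bi-stable labellings leave no argument $\lundec$''. Those facts yield only \emph{at most} one: they do not exclude a bi-stable labelling in which \emph{every} class argument is $\lout$. This possibility is not vacuous when $|\class| \geq 3$: if the rule layer realizes an ``everyone loses'' vote (say nine trees splitting $3$--$3$--$3$ over three classes), then for each class argument the attackers dominate the supporters ($6 > 3$), so labelling all class arguments $\lout$ is consistent with both bi-complete conditions, and no argument is undecided. Such a labelling accepts $S_{\by_F}$ but no class argument; it is counted in $N_{(\by_F)}$ (and in $|\labellings^s(\bag_{\rf})|$) but contributes nothing to the marginal $\mathrm{P}_{\rf}(\by_F)$, because the corresponding full assignments all have class factor $0$. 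This breaks your identity $N_{(\by_F)} = \sum_{y'} N_{(C=y',\by_F)}$, the surjectivity of $\bx \mapsto L_{\bx}$, and with them Parts~1, 3 and~4. The repair within the paper's framework is to invoke completeness rather than correctness: show that any bi-stable $L$ coincides with $L_{\cinput}$ for any $\cinput \in S_L$ (feature and rule labels agree by Lemmas~\ref{lemma_bistable_feature_properties} and~\ref{lemma_bistable_rule_properties}, and the class labels are then determined by domination over the rule layer), so that Proposition~\ref{prop_arg_encoding_completeness} gives $\outputf_{\rf}(\cinput) \neq \bot$ and Proposition~\ref{prop_bistable_class_properties} then produces the single accepted class argument. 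To your credit, this is a point the paper's own proof passes over in silence when it equates marginal counts with labelling counts; but the citation you offer in its place is not sufficient to close it.
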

\if\showproofs1
\begin{proof}
1. Using Proposition \ref{prop_plausibility}, we have that
$$Z = \sum_{\bx} \mathrm{Pl}_{\rf}(\bx)
= \sum_{\bx: L_{\bx} \ \textrm{is bi-stable}} 1 = |\labellings^s(\bag_{\rf})|$$
is the number of bi-stable labellings of the explanation BAG. From this, we obtain the claim by
putting $Z = |\labellings^s(\bag_{\rf})|$ in
Proposition \ref{prop_count_ambigous_vs_labellings}.

2. Using again Proposition \ref{prop_plausibility}, we have
$$P_{\rf}(C=y, \by_F)
= \frac{1}{Z} \sum_{
  \substack{ 
      \bx: L_{\bx} \ \textrm{is bi-stable} \\
      L_{\bx}(S_{(C=y, \by_F)})=\lin
    } 
}    1 
= \frac{N_{(C=y, \by_F)}}{Z}.
$$

3 and 4. Similar to the previous item, we can show that 
$P_{\rf}(C=y) = \frac{N_{(C=y)}}{Z}$
and
$P_{\rf}(\by_F) = \frac{N_{(\by_F)}}{Z}$,
where $N_{(C=y)}$ is the number of
bi-stable labellings that accept $\{A_y\}$
and $N_{(\by_F)}$ is the number of
bi-stable labellings that accept all
arguments in $S_{\by_F}$. Using the previous
item and the definition of conditional probability
we have that 
\begin{align*}
  \mathrm{P}_{\rf}(C=y \mid \by_F)
 &= \frac{\mathrm{P}_{\rf}(C=y, \by_F)}{\mathrm{P}_{\rf}(\by_F)}
 =
\frac{
 \frac{N_{(C=y, \by_F)}}{Z}
}{
 \frac{N_{(\by_F)}}{Z}
}\\
&= 
\frac{N_{(C=y, \by_F)}}{N_{(\by_F)}}
\end{align*}
Note that this is just the percentage of 
labellings that accept both  $A_y$ and $S_{\by_F}$ among those that accept
$S_{\by_F}$. This implies 3. 4 follows
analogously.
\end{proof}
\fi

\subsection{A Probabilistic Approximation Algorithm}

Proposition \ref{prop_ambiguous_complexity}
and the complexity results for deciding
AXps and $\delta$-AXps from \cite{Izza021}
and \cite{WaldchenMHK21} make it unlikely 
that there is an efficient exact algorithm for
computing the partition function and the
probabilities in Proposition \ref{prop_MN_queries}.
We therefore consider a probabilistic
algorithm that approximates the probabilities.
Readers familiar with Bayesian networks may 
notice that $P_{\rf}$ is almost a Bayesian 
network: The variable factors are independent
of all other factors, the tree factors depend
only on the variable factors and the class factor
only on the tree factors. The dependency structure of the factors in $P_{\rf}$ is
therefore acyclic like in a Bayesian network.
However, the class factor cannot be interpreted
as a conditional probability distribution
because it does not define a probability distribution when the configuration of the 
tree factors corresponds to an ambiguous input.
Nevertheless, the acyclic structure allows us
to use forward sampling ideas for Bayesian
networks \cite{koller_probabilistic_2009} to approximate sufficient and
necessary queries.

\begin{figure}
\begin{align*}
    &\textbf{Input: } \textit{ rand. forest } \rf, \textit{ queries } (\bz_1 \mid \by_1), \dots, (\bz_l \mid \by_l) \\[0.05cm]
    &\textbf{Output:} \textit{ estimates for } \mathrm{P}_{\rf}(\bz_1 \mid \by_1), \dots, \mathrm{P}_{\rf}(\bz_l \mid \by_l) \\[0.3cm]
    &\textbf{DO:} \\[0.05cm]
    &|\hspace{0.3cm} E \leftarrow sampleEquivalenceClass(\rf) \\[0.05cm] 
    &|\hspace{0.3cm} \textbf{IF } \outputf_{\rf}(E) \neq \bot: \\[0.05cm] 
    &|\hspace{0.3cm}|\hspace{0.3cm} countNonambiguous() \\[0.05cm] 
    &|\hspace{0.3cm}|\hspace{0.3cm} \bx \leftarrow computeAssignment(E) \\[0.05cm] 
    &|\hspace{0.3cm}|\hspace{0.3cm} \textbf{FOR } i=1 \textbf{ TO } k: \\[0.05cm]
    &|\hspace{0.3cm}|\hspace{0.3cm}|\hspace{0.3cm} \textbf{IF } \textit{$\bx|_{\bY_i} = \by_i$:} \\[0.05cm] 
    &|\hspace{0.3cm}|\hspace{0.3cm}|\hspace{0.3cm}|\hspace{0.3cm} \textbf{IF } \textit{$\bx|_{\bZ_i}=\bz_i$: }
    countPos(\bz_i, \by_i) \\[0.05cm] 
    &|\hspace{0.3cm}|\hspace{0.3cm}|\hspace{0.3cm}|\hspace{0.3cm}\textbf{ELSE: } 
    countNeg(\bz_i, \by_i)  \\[0.05cm] 
    &|\hspace{0.3cm} \textbf{ELSE: } countAmbiguous() \\[0.05cm]
    &\textbf{WHILE } \textit{termination condition not met} \\[0.05cm]
    &\textbf{RETURN } estimates()
\end{align*}
    \caption{Probabilistic approximation algorithm for estimating the percentage of non-ambiguous inputs, and the probabilities of
    sufficient and necessary queries.}
    \label{fig:sampling_algorithm}
\end{figure}
Figure \ref{fig:sampling_algorithm} shows the template of our
algorithm. 
It expects as input a random forest and the 
probabilistic queries that are to be approximated.
The queries consist of sufficient queries (item 2) or necessary queries (item 3)
in Proposition \ref{prop_ambiguous_complexity}. 
The algorithm uses forward sampling (from the
feature variables to the class variable).
It repeatedly samples 
equivalence classes of inputs for $\rf$. Since the tree and class
factors are deterministic, the state of the 
tree and class variables is already determined
by this sample and their state is only computed
if needed. With a slight abuse of notation,
we write $\outputf_{\rf}(E)$ for $\outputf_{\rf}(e)$, where $e \in E$ is an
arbitrary input from the equivalence class
$E$ (recall that all inputs in $E$ are 
indistinguishable for $\rf$).
Ambiguous samples are rejected
immediately, but we keep track of their number
($countAmbiguous()$). For non-ambiguous samples,
we also iterate a counter ($countNonambiguous()$)
and complete the variable assignment. 
The completed samples are used to approximate the queries by relative frequencies. 
More precisely, $countPos(\bz_i, \by_i)$ and $countNeg(\bz_i, \by_i)$
increment counters $N^+_{(\bz_i, \by_i)}$ or $N^-_{(\bz_i, \by_i)}$
that count how often the 
target $\bz_i$ was satisfied or not
when the condition $\by_i$ was satisfied. 
The estimate for the conditional probability
$P_\rf(\bz_i \mid \by_i)$
is
$\frac{N^+_{(\bz_i, \by_i)}}{N^+_{(\bz_i, \by_i)} + N^-_{(\bz_i, \by_i)}}$. The estimate for
the percentage of non-ambiguos input equivalence classes is
$\frac{N_n}{N_n + N_a}$, where $N_a$ ($N_n$)
is the numbers of (non-)ambiguous equivalence
classes that we sampled. Multiplying this
fraction by the number of all equivalence classes
results in an estimate for the number of 
non-ambiguos input equivalence classes,
but we restrict to reporting the percentage as
it is easier to comprehend.
We have the following guarantees, where 
\emph{convergence in probability} means
that the probability that the estimates
deviate from the target by more than an
arbitrarily small $\epsilon$ goes to $0$
as the number of samples goes to $\infty$.
\begin{proposition}
\label{prop_convergence_guarantees}
When sampling inputs uniformly and independently in the algorithm in Figure 2, then $N_n/(N_n + N_a)$ converges 
in probability to the percentage of non-ambiguous equivalence classes and $e_{(\bz_i \mid \by_i)} = N^+_{(\bz_i, \by_i)}/(N^+_{(\bz_i, \by_i)} + N^-_{(\bz_i, \by_i)})$ to $\mathrm{P}_{\rf}(\bz_i \mid \by_i)$
for $1 \leq i \leq l$. Every iteration runs in linear time with respect to
$\rf$ and the number of queries $l$. Furthermore,
if we have $M \geq \frac{3 \ln(2/\delta)}{P(\bz_i \mid \by_i) \cdot \epsilon^2}$ samples for $e_{(\bz_i \mid \by_i)}$, then 
$
P(e_{(\bz_i \mid \by_i)}\in \mathrm{P}_{\rf}(\bz_i \mid \by_i) \cdot (1 \pm \epsilon)) \geq 1 - \delta.
$
\end{proposition}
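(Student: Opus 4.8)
The proposition bundles three assertions: (1) consistency — that the three sampling estimators converge in probability to the correct targets; (2) a per-iteration running-time bound; and (3) a concentration bound giving a sample complexity $M$ sufficient to guarantee a multiplicative $(1\pm\epsilon)$ approximation of $\mathrm{P}_{\rf}(\bz_i \mid \by_i)$ with probability at least $1-\delta$. I would prove these separately, since they rely on quite different tools.

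\textbf{Convergence in probability.} The plan is to observe that sampling an equivalence class uniformly and independently makes each draw an i.i.d.\ Bernoulli trial for the events of interest. For the non-ambiguous percentage, each sample contributes an indicator of the event ``$\outputf_{\rf}(E) \neq \bot$''; by Proposition~\ref{prop_arg_encoding_completeness} this event corresponds exactly to $L_{\cinput}$ being bi-stable, so its probability equals the true fraction of non-ambiguous equivalence classes. The weak law of large numbers then gives convergence of $N_n/(N_n+N_a)$ in probability. For the conditional-probability estimators, I would note that $e_{(\bz_i \mid \by_i)}$ is a ratio of two running counts, both of which are empirical means over the non-ambiguous samples. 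Conditioning on the event $\{\bx|_{\bY_i} = \by_i\}$, the estimator is the empirical frequency of $\{\bx|_{\bZ_i} = \bz_i\}$; by Proposition~\ref{prop_MN_queries}(3)--(4) this empirical conditional frequency targets exactly $\mathrm{P}_{\rf}(\bz_i \mid \by_i)$. Convergence then follows from the law of large numbers applied to numerator and denominator together with the continuous-mapping theorem, keeping in mind that $\mathrm{P}_{\rf}(\by_i) > 0$ so the denominator is bounded away from zero asymptotically.

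\textbf{Linear per-iteration running time.} Here I would simply trace through the algorithm in Figure~\ref{fig:sampling_algorithm}. Sampling an equivalence class draws one partition index per feature, which is $O(k)$. Computing $\outputf_{\rf}(E)$ runs every tree once on the sampled input, which is linear in the size of $\rf$ (each tree is processed in time linear in its depth, as noted earlier in the membership argument of Proposition~\ref{prop_ambiguous_complexity}). The inner loop over the $l$ queries performs, for each query, a restriction comparison $\bx|_{\bY_i} = \by_i$ and $\bx|_{\bZ_i} = \bz_i$ and a counter increment, all of which are linear in the query size. Summing gives a per-iteration cost linear in $\rf$ and $l$.

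\textbf{The concentration bound.} This is the step I expect to be the main obstacle, because the stated bound $M \geq \frac{3\ln(2/\delta)}{P(\bz_i \mid \by_i)\cdot \epsilon^2}$ has the true (unknown) probability in the denominator, which is the signature of a \emph{multiplicative} Chernoff bound rather than an additive Hoeffding bound. The plan is to invoke the multiplicative form of the Chernoff bound for a sum of i.i.d.\ Bernoulli variables: if $S = \sum_{j=1}^M I_j$ with $\mathbb{E}[S] = \mu = M\cdot p$ and $p = P(\bz_i \mid \by_i)$, then $\Pr\!\bigl(|S/M - p| > \epsilon p\bigr) \leq 2\exp(-\mu\epsilon^2/3)$ for $\epsilon \in (0,1)$. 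Substituting $\mu = Mp$ and demanding the right-hand side be at most $\delta$ yields $M p \epsilon^2/3 \geq \ln(2/\delta)$, i.e.\ exactly $M \geq \frac{3\ln(2/\delta)}{p\,\epsilon^2}$, which matches the claim. The subtlety to handle carefully is that the $M$ samples feeding $e_{(\bz_i \mid \by_i)}$ are precisely those $M$ draws on which the condition $\by_i$ holds, so the relevant trials are the i.i.d.\ indicators of $\{\bx|_{\bZ_i} = \bz_i\}$ \emph{conditioned} on $\{\bx|_{\bY_i} = \by_i\}$; I would make explicit that conditioning on the filter preserves independence and that the conditional success probability is the target $p$, so the multiplicative Chernoff bound applies to these $M$ conditioned trials directly.
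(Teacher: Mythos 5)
Your proposal is correct and follows essentially the same route as the paper's proof: i.i.d.\ Bernoulli indicators plus the law of large numbers for consistency, a trace of the algorithm for the linear per-iteration cost, and the multiplicative Chernoff bound for the sample-complexity claim. The only difference is that you derive the Chernoff bound and the conditioning-preserves-independence step explicitly, whereas the paper delegates exactly this argument to the Chernoff bounds for forward/rejection sampling in Section 12.1 of \cite{koller_probabilistic_2009}.
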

\if\showproofs1
\begin{proof}
When sampling inputs uniformly and independently, the completion
method will generate uniform and independent inputs for the Markov
network. This is because an assignment to the input variables
uniquely determines the state of the tree and class variables
because the factors are deterministic. 
As our estimates are composed of sums of
independent Bernoulli random variables, whose
individual expected values are the probabilities
of the associated event, the convergence guarantee
in probability for all estimates follows from the law of large numbers.

For the runtime guarantee, note that sampling an input and 
completing it can clearly be done in linear time with respect
to $\rf$. The counting check can be done in linear time with
respect to the number of random variables in $P_{\rf}$, which
is linear with respect to $\rf$. The number of iterations is $l$. 

For the error bound, first recall that
the dependency structure of the Markov network is acyclic as we explained in the main part. This
allows applying forward sampling algorithms for
Bayesian networks. The samples for
$e_{(\bz_i \mid \by_i)}$ can be seen as samples
for the marginal distribution
$P(\bz_i) = P_{\rf}(\bz_i \mid \by_i)$ (rejection sampling).
The error bounds for the queries
therefore follow from the Chernoff bounds for
forward sampling applied to $P(\bz_i)$,
see Section 12.1 in 
\cite{koller_probabilistic_2009}.
\end{proof}
\fi
Let us note that even though every iteration of
our
algorithm runs in
linear time, we may require a large number of iterations until the estimates converge. 
The probabilistic error bound at the end of
Proposition \ref{prop_convergence_guarantees}
shows that
the convergence speed depends on the number of samples generated for
$e_{(\bz_i \mid \by_i)}$. 
If $P_\rf(\by_i)$ is small, 
this will take longer. Typically, the estimates for necessary queries (conditioned on a class label) and sufficient queries for shorter abductive explanations will converge faster.

Formally, we simultaneously approximate
the percentage of non-ambiguous inputs using
Monte-Carlo sampling and the probabilities of
the queries using rejection sampling \cite{koller_probabilistic_2009}. 
Every sample that we create in our algorithm
can be used for the Monte-Carlo approximation,
but only a fraction for individual rejection 
samples. It can be wasteful not to use the
Monte-Carlo samples for the queries. However,
once a sufficiently large number of samples
has been created for the Monte-Carlo 
approximation, we can switch from rejection
sampling to conditional forward sampling. 
That is, if $e_{(\bz_i \mid \by_i)}$ requires additional samples,
we fix the state of the variables $\by_i$ and
sample only the remaining feature variables,
which is justified by the acyclic dependency
structure of the factors in $P_\rf$ that we 
explained at the beginning of this section.

\subsection{Implementation and Experiments}

As a first proof of concept, we implemented
a simple variant of the algorithm in Figure \ref{fig:sampling_algorithm} in Python.
The sourcecode and Jupyter notebooks to reproduce the experiments are available under
\url{https://github.com/nicopotyka/Uncertainpy}
in the folders \emph{src/uncertainpy/explanation}
and \emph{examples/explanations/randomForests}, respectively.
We consider a reason $\by$ \emph{almost sufficient} for $C=y$ if it is $\delta$-sufficient and
$P_\rf(C=y \mid \by) > 1.1 \cdot P_\rf(C=y)$,
that is, $\by$ results in a relative 
increase of the probability of at least
10 \%.  We consider $\by$ \emph{almost necessary} if it is $\delta$-necessary.
In this case, we do not need to take the prior
into account because we sample uniformly from
the partition domains (the prior is therefore always at most $0.5)$. We chose $\delta = 0.9$.

Our implementation works in two stages.
The first stage is analogous to 
Figure \ref{fig:sampling_algorithm} and
the queries are the atomic sufficient and 
necessary queries of the form
$(U_y \mid U_i)$ and $(U_i \mid U_y)$ for
all combinations of feature arguments $U_i$
and class arguments $U_y$. At the end of
stage 1, we report the estimated percentage 
of non-ambiguous inputs and all almost
sufficient and necessary reasons that were
found. We can combine all
almost necessary reasons to a single big
necessary reason for reasons similar to
Proposition \ref{prop_nec_reason_bottom_up}.
However, there may be many more almost
sufficient reasons.
Therefore, in the second stage, the algorithm tries to
find almost sufficient reasons of size $2$.
To this end, for all pairs of features $(U_i, U_j)$,
and all possible assignments $(u_i, u_j)$ of equivalence
classes to these features, we perform
forward sampling conditioned on
the feature assignment $(u_i, u_j)$ and
report the estimate if the probability 
exceeds the $\delta$-threshold.
For every pair $(u_i, u_j)$, the probability
can be estimated quickly. However, since there
can be a large number of pairs, the overall  
runtime can be long and the almost sufficient
reasons of size $2$ are reported continuously
while the sampling procedure is running.

We tested our algorithm on three datasets.
The Iris and PIMA dataset are continuous
datasets that have been considered for counterfactual explanations \cite{WhiteG20}.
In addition, we consider the Mushroom
dataset that contains discrete features.
For the random forest trained on the
Iris dataset, the estimated percentage of
non-ambiguous input equivalence classes is
98 \% and we found several almost sufficient
reasons. These included $1$-sufficient reasons. 
The estimates are based on several hundred examples, 
but since there is uncertainty in the sampling
process, we should be careful and assume that
these are $\delta$-sufficient for $\delta$ close
to $1$, but not necessarily equal to $1$.
$\mathrm{petal length} \in (5.0, 5.14]$ is an example of an almost 
sufficient reason of length $1$ 
for the class $\mathrm{Virginica}$.
The pair $(\mathrm{sepal length} \in (5.45, 5.5],
\mathrm{petal length} \in (2.64, 2.75]
)$ is an almost sufficient reason of length $2$ for
the class $\mathrm{Versicolor}$. We generated 10,000 samples for the first
stage in less than one minute on a Windows
laptop with i7-11800H CPU and 16 GB RAM. 
The second stage produced a variety of 
other sufficient reasons within seconds,
but many redundant reasons are reported in the current version. For the Mushroom dataset,
we found that our random forest learnt that $\mathrm{Odor\_Foul=1}$ is
$0.99$-sufficient for $\mathrm{Poisonous}$
and $\mathrm{Odor\_Foul=0}$ is
$0.98$-necessary for $\mathrm{Edible}$.
We provide more details and examples about the experiments and how to reproduce them in the appendix.

\section{Conclusions and Future Work}

We showed that the decision process of random
forests can be encoded as a bipolar argumentation
problem, which allows finding sufficient and
necessary reasons for classifications using 
argumentation tools. Here, we regarded the
problem as an abstract argumentation problem,
where we only focus on relationships between
arguments, while ignoring their logical structure. The resulting reasoning problems
are often solved using reductions to SAT
\cite{dvorak2012cegartix,beierle2015software,alviano2018pyglaf}, CSP \cite{lagniez2015coquiaas}
or Markov networks \cite{potyka_abstract_2020}.
We used the latter here as it naturally 
leads to almost sufficient and almost
necessary reasons and offers a variety of 
probabilistic approximation algorithms. 
Plans for improving our current work include 
the use of structured argumentation tools that
allow incorporating the logical structure
of trees in the reasoning process \cite{BorgB21,BesnardGHMPST14},
and improving our sampling algorithm by taking
account of less obvious conditional independencies 
in the Markov network structure and applying
rule mining algorithms to replace our current
brute-force approach for finding almost sufficient and almost necessary reasons.





\section{Acknowledgments}
This research was partially funded by the  European Research Council (ERC) under the
European Union’s Horizon 2020 research and innovation programme (grant
agreement No. 101020934, ADIX) and by J.P. Morgan and by the Royal
Academy of Engineering under the Research Chairs and Senior Research
Fellowships scheme.  Any views or opinions expressed herein are solely those of the authors.

\appendix

\section{Proofs}

\setcounter{theorem}{0}
\setcounter{proposition}{0}
\setcounter{lemma}{0}

\begin{proposition}
\begin{itemize}
    \item Deciding if there exists 
    an ambiguous input for a B4L random forest 
    is NP-complete.
    \item Counting the number of ambiguous inputs for a B4L random forest is \#P-complete.
\end{itemize}
\end{proposition}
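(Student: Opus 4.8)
The plan is to prove membership and hardness separately, using one reduction that serves both parts. For membership, observe that an ambiguous input is a witness of size polynomial in $\rf$, and checking whether $\outputf_{\rf}(\cinput)=\bot$ just runs every tree on $\cinput$, which is linear in the size of $\rf$. Hence the decision problem lies in NP and the counting problem in $\#P$.

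For hardness I would reduce from 3SAT. Given a 3CNF formula $F=\bigwedge_{i=1}^m C_i$ over boolean variables, I encode each clause $C_i=(l_{i,1}\vee l_{i,2}\vee l_{i,3})$ as a single decision tree $\dt_i$ of depth at most $3$: test the three underlying atoms in order, routing to a leaf labelled $1$ as soon as one literal is satisfied, and to a \emph{single} leaf labelled $0$ only when all three literals are falsified (branch directions flipped for negative literals). This tree has exactly four leaves, three labelled $1$ and one labelled $0$, so it is a B4L tree. To force a balanced vote I add $m$ constant stumps, each a single leaf labelled $0$. The resulting forest $\rf$ has $2m$ trees of constant size and is built in time linear in $|F|$.

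The correctness argument rests entirely on the tie condition. For any boolean assignment $\cinput$, the $m$ stumps contribute exactly $m$ votes for class $0$, while the number of votes for class $1$ equals the number of clauses satisfied by $\cinput$. Writing $k$ for this number, the vote counts are $k$ for class $1$ and $2m-k$ for class $0$, so $\outputf_{\rf}(\cinput)=\bot$ holds precisely when $k=m$, i.e.\ when every clause is satisfied. Thus the ambiguous inputs of $\rf$ are exactly the satisfying assignments of $F$, which gives NP-hardness and hence NP-completeness of the first part. For the counting part, this same reduction is \emph{parsimonious}: the identification of boolean assignments with inputs is a bijection preserving the ambiguous/satisfying property, so $|\ambSet(\rf)|$ equals the number of satisfying assignments of $F$. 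Since $\#3\mathrm{SAT}$ is $\#P$-complete and the reduction is polynomial-time and count-preserving, counting ambiguous inputs is $\#P$-hard, and together with membership $\#P$-complete.

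The step I would watch most carefully is the clause gadget: I must verify that it genuinely fits within four leaves while realising the clause as a boolean function, and that no vote pattern \emph{other} than a full tie can arise. The stump padding is exactly what pins the tie to $k=m$ and rules out spurious ambiguous inputs, so the count-preserving property falls out for free once the gadget is correct.
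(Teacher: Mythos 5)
Your proof is correct and takes essentially the same route as the paper's: NP/\#P membership via the polynomially-sized, linear-time-checkable witness, followed by a reduction from 3SAT that builds one depth-3 clause tree per clause (four leaves, three labelled $1$, one labelled $0$) padded with $m$ constant $0$-stumps, and which is parsimonious, so it settles both the decision and the counting part at once. Your explicit tally ($k$ votes for class $1$ versus $2m-k$ for class $0$, tie iff $k=m$) merely spells out the tie argument the paper leaves as a parenthetical remark.
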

\begin{proof}
We use the terminology and conventions from \cite{papadimitriou95book} in the proof.

1. Membership follows from observing that an
ambiguous input is a polynomially-sized 
certificate that can be verified in 
polynomial time (because the random forest processes inputs in linear time).
For hardness, we show that there is a 
polynomial-time reduction from 3SAT to 
deciding the existence of ambiguous inputs
for B4L random forests. 
Given an arbitrary
3CNF formula $F: \bigwedge_{i=1}^m \big( \bigvee_{i=1}^3 l_{i,j} \big)$ with $m$ clauses, we create a decision tree of depth 3 for every clause $\big( \bigvee_{i=1}^3 l_{i,j} \big)$. It contains 
three decision nodes that contain the atoms corresponding to $l_{i,1}, l_{i,2}, l_{i,3}$. If $l_{i,j}$ is a positive literal,
the positive branch will go to a leaf labelled $1$ and the
negative branch to $l_{i,j+1}$ if $j<3$ and to a leaf labelled
$0$ otherwise. A negative literal is treated symmetrically by
switching the negative and positive branch. We illustrate the
construction in Figure \ref{fig_clause_2_tree}.
\begin{figure}[tb]
	\centering
		\includegraphics[width=0.35\textwidth]{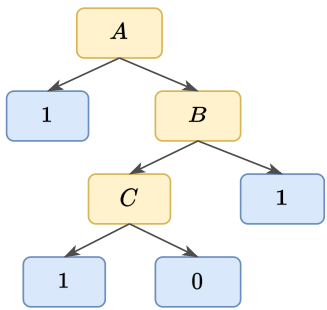}
	\caption{Tree for the 3-clause $A \wedge \neg B \wedge C$.}
	\label{fig_clause_2_tree}
\end{figure}
We now define a B4L random forest consisting of the $m$ trees
obtained from the clauses and $m$ additional tree stumps that
consists of a single leaf that is labelled $0$. Every tree
has constant size and so the size of the random forest is linear
in the size of $F$. Furthermore, 
$F$ is satisfiable 
\begin{itemize}
    \item iff all clauses can be satisfied simultaneously,
    \item iff there is an input such that all clause trees
    output $1$,
    \item iff there is an ambiguos input (because the $m$ 
    clause trees will vote $1$ and the $m$ tree stumps will vote $0$).
\end{itemize}
Hence, $F$ is satisfiable if and only
if the random forest has an ambiguous input. 

2. As (ambiguous) inputs are polynomial in the size of the 
random forest and can be processed in polynomial time, the
problem is in $\#P$. Hardness  follows from observing that
our previous reduction is a parsimonious reduction from
3SAT, that is, the number of satisfying assignments of the
3CNF formula
equals the number of ambiguous inputs of the random forest.
Hence, the claim follows from $\#P$-hardness of $\#3SAT$. 
\end{proof}

\begin{proposition}
$\bag_{\rf}$ can be generated from $\rf$ in quadratic time.
\end{proposition}
\begin{proof}
We have one class argument per class,
one rule argument per leaf in a decision tree and one feature argument per feature condition. Hence, the number of arguments is
linear in the size of the random forest.
Since the number of edges between the arguments
can be at most quadratic, the claim follows.
\end{proof}

\begin{lemma}
Let $L$ be a labelling for $\bag_{\rf}$.
\begin{enumerate}
    \item If $L \in \labellings^c(\bag_{\rf})$, then for all features $X_i$, either
    \begin{itemize}
    \item $L(A_{X_i \in S_{i,j}}) = \lundec$ for all $A_{X_i \in S_{i,j}} \in \args_{X_i}$ or
    \item $L(A_{X_i \in S_{i,j}}) = \lin$ for exactly one $A_{X_i \in S_{i,j}} \in \args_{X_i}$ and $L(A_{X_i \in S_{i,j'}}) = \lout$ for all
    other $A_{X_i \in S_{i,j'}} \in \args_{X_i} \setminus \{A_{X_i \in S_{i,j}}\}$. 
    \end{itemize}
    \item If $L \in \labellings^s(\bag_{\rf})$, then for all features $X_i$,
    $L(A_{X_i \in S_{i,j}}) = \lin$ for exactly one $A_{X_i \in S_{i,j}} \in \args_{X_i}$ and $L(A_{X_i \in S_{i,j'}}) = \lout$ for all
    other $A_{X_i \in S_{i,j'}} \in \args_{X_i} \setminus \{A_{X_i \in S_{i,j}}\}$.
\end{enumerate}
\end{lemma}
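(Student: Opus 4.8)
The plan is to exploit two structural facts about the feature layer that are immediate from the construction of $\bag_{\rf}$: first, the arguments in $\args_{X_i}$ form a mutual-attack clique, so the attackers of any $A_{X_i \in S_{i,j}}$ are exactly the remaining arguments of $\args_{X_i}$; second, feature arguments receive no support, since support edges run only from the rule to the class layer. First I would use the second fact to simplify the bi-complete conditions on a feature argument $A = A_{X_i \in S_{i,j}}$. Because $\supporter(A) = \emptyset$, the clause ``$A$'s supporters dominate its attackers'' can never fire (it would require a strictly positive count of $\lin$-supporters), so the first bi-complete condition collapses to: $L(A) = \lin$ iff every attacker of $A$ is $\lout$. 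Likewise ``$A$'s attackers dominate its supporters'' reduces to the requirement that at least one attacker of $A$ be labelled $\lin$, so $L(A) = \lout$ iff some other argument in $\args_{X_i}$ is $\lin$.

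With these simplified conditions in hand I would prove Item~1 by a short case split on whether $\args_{X_i}$ contains an $\lin$-labelled argument. If some $A_{X_i \in S_{i,j}}$ is $\lin$, then all of its attackers --- that is, all other arguments of $\args_{X_i}$ --- are $\lout$; moreover no second argument can be $\lin$, since such an argument would need all of its attackers, including the first $\lin$ argument, to be $\lout$, a contradiction. This yields the second bullet: exactly one $\lin$ and all others $\lout$. If, on the other hand, no argument of $\args_{X_i}$ is $\lin$, then by the simplified $\lout$-condition no argument can be $\lout$ either (being $\lout$ requires an $\lin$ attacker within the same clique), so every argument is forced to be $\lundec$, giving the first bullet.

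Item~2 then follows immediately: a bi-stable labelling assigns no argument the label $\lundec$, which rules out the all-$\lundec$ alternative of Item~1 and leaves exactly the ``one $\lin$, rest $\lout$'' case. I expect the main obstacle to be purely bookkeeping: correctly reading the two domination inequalities when the supporter set is empty, and in particular tracking the strict inequalities so that the empty supporter set makes the $\lin$-domination clause unsatisfiable while turning the $\lout$-domination clause into the mere existence of an $\lin$ attacker. Once those reductions are justified, the remainder is an elementary clique argument.
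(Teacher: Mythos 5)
Your proof is correct and takes essentially the same route as the paper's: both exploit that the arguments in $\args_{X_i}$ form a mutual-attack clique with no supporters and no attackers from outside, and then conclude by the same case analysis (some argument $\lin$ forces all others $\lout$; no argument $\lin$ forces all $\lundec$), with Item 2 following because bi-stable labellings exclude $\lundec$. Your explicit reduction of the two domination conditions under an empty supporter set is just a more careful spelling-out of what the paper's proof uses implicitly.
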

\begin{proof}
1. Note that, by construction, all arguments in $\args_{X_i}$
 attack each other.
 Furthermore, feature arguments do not have supporters.
 Hence, if one argument in $\args_{X_i}$ is $\lundec$,
 then all others must be $\lundec$ as well.
 Since arguments in $\args_{X_i}$ are only attacked by arguments in $\args_{X_i}$,
 if one argument in $\args_{X_i}$ is $\lout$, 
 then there must be another argument in $\args_{X_i}$
 that attacks it and is $\lin$. Hence, we are either in
 the situation where all arguments are undecided or 
 at least one argument is $\lin$.
 However, if $L(A_{X_i \in S_{i,j}}) = \lin$, then all other $A_{X_i \in S_{i,j'}} \in \args_{X_i}$ must be out.

2. The claim follows immediately from item 1 because
 bi-stable labellings do not allow labelling arguments
 undecided.
\end{proof}

\begin{lemma}
If $L \in \labellings^s(\bag_{\rf})$, then for all trees $\dt \in \rf$,
$A_{\dt, r} \in \args_{\dt}$ is labelled $\lin$
if and only if $r$ is the active rule in $\dt$ 
for all inputs $\cinput \in S_L$. Furthermore, all other rule
arguments in $\args_{\dt}$ are labelled out.
\end{lemma}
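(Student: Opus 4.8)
The plan is to combine the feature-layer structure from Lemma~\ref{lemma_bistable_feature_properties} with a direct case analysis of the bi-complete conditions applied to a single rule argument. First I would invoke item~2 of Lemma~\ref{lemma_bistable_feature_properties}: since $L \in \labellings^s(\bag_{\rf})$, for every feature $X_i$ exactly one feature argument $A_{X_i \in S_{i,j}}$ is labelled $\lin$ and all the others are $\lout$. These accepted constraints are exactly the ones that determine the equivalence class $S_L$, so every input $\cinput \in S_L$ satisfies precisely the feature constraints accepted by $L$. In particular the truth value of each feature literal is constant across $S_L$, which is what makes the phrase ``the active rule for all $\cinput \in S_L$'' well defined.

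Next I would record how the bi-complete clauses specialize to a rule argument $A_{\dt,r}$. By construction rule arguments have no supporters, so the clause ``$A$'s supporters dominate its attackers'' can never fire (its left-hand count is $0$); likewise $A_{\dt,r}$ has no $\lin$ supporters to break a tie. Hence $L(A_{\dt,r}) = \lin$ holds exactly when none of its attackers is $\lin$, and $L(A_{\dt,r}) = \lout$ holds exactly when at least one attacker is $\lin$. Because $L$ is bi-stable these two cases are complementary (no argument is $\lundec$). The attackers of $A_{\dt,r}$ are, by definition of $\attacker_{F,R}$, exactly the feature arguments $A_{X_i \in S_{i,j}}$ whose constraint $X_i \in S_{i,j}$ is inconsistent with some literal of $\prem(r)$.

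The heart of the argument, and the step I expect to be the main obstacle, is to translate ``no accepted feature constraint attacks $A_{\dt,r}$'' into ``$r$ is the active rule on $S_L$''. This relies on the domain partition of Definition~\ref{def_dom_part} refining every feature condition occurring in $\rf$, so each premise literal of $r$ has a constant truth value on the cell $S_L$, and that value is governed by the unique accepted constraint for the feature it mentions: the accepted constraint attacks $A_{\dt,r}$ if and only if it is inconsistent with that literal, which happens if and only if every $\cinput \in S_L$ violates the literal. Consequently $A_{\dt,r}$ has no $\lin$ attacker if and only if every $\cinput \in S_L$ satisfies all literals of $\prem(r)$, i.e. if and only if $r$ is the active rule of $\dt$ on $S_L$. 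Combining this equivalence with the specialization of the bi-complete clauses from the previous paragraph yields the claimed ``if and only if''.

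Finally, for the ``furthermore'' part I would appeal to exhaustivity and exclusivity of the rules of $\dt$: for the inputs in $S_L$ there is exactly one active rule $r^{*}$, so by the equivalence just proved $A_{\dt,r^{*}}$ is the unique rule argument of $\args_{\dt}$ labelled $\lin$. Every other rule $r' \neq r^{*}$ is inactive on $S_L$, so some literal of $\prem(r')$ is violated; the corresponding accepted (hence $\lin$) feature argument then attacks $A_{\dt,r'}$, and by the specialization above this forces $L(A_{\dt,r'}) = \lout$.
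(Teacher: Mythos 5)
Your proposal is correct and follows essentially the same route as the paper's proof: inputs in $S_L$ satisfy exactly the feature constraints accepted by $L$, attacks from $\lin$-labelled feature arguments correspond precisely to violated premise literals, and the semantics then forces the stated labelling, with exhaustivity/exclusivity of the rules giving the ``furthermore'' part. If anything, your write-up is more careful than the paper's: the paper loosely speaks of rule arguments being ``supported'' by feature arguments even though $\supporter_{\rf}$ contains only rule-class edges, whereas your explicit specialization of the bi-complete clauses to supporter-free rule arguments (in iff no attacker is in, out iff some attacker is in) makes that step rigorous.
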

\begin{proof}
Every input
$\cinput \in S_L$ satisfies all
feature constraints accepted by $L$. 
Therefore, a rule argument $A_{\dt, r} \in \args_{\dt}$ 
is supported (attacked) by
an $\lin$-labelled feature argument if and only if
$\cinput$ satisfies (violates) the corresponding feature
condition. Therefore, $A_{\dt, r} \in \args_{\dt}$ is labelled $\lin$ if and only if $r$ is the active rule in $\dt$ for $\cinput$

The second claim follows from the same considerations
because the inactive rules must be attacked by the
$\lin$-labelled arguments.
\end{proof}

\begin{proposition}[Correctness]
If $L \in \labellings^s(\bag_{\rf})$, then for all class arguments
$A_y \in \args_{\class}$,
$L(A_y) = \lin$ if and only if $\outputf_{\rf}(\cinput) = y$ for all $\cinput \in S_L$. Furthermore, if $L(A_y) = \lin$, then
$L(A_{y'}) = \lout$ for all $A_{y'} \in \args_{\class}$.
\end{proposition}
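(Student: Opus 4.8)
The plan is to prove the biconditional and the separation claim by leveraging the already-established structural lemmas, in particular Lemma~\ref{lemma_bistable_rule_properties}, which tells us that in a bi-stable labelling $L$ exactly one rule argument $A_{\dt,r}$ per tree is labelled $\lin$, and that this $\lin$-rule is precisely the active rule in $\dt$ for every input $\cinput \in S_L$. This reduces the behaviour at the class layer to a pure counting argument over which rule arguments support versus attack a given $A_y$. The key observation is that the support/attack structure between rules and classes encodes exactly the majority vote: $A_{\dt,r}$ supports $A_y$ iff $\conc(r)=y$ and attacks $A_y$ otherwise. Hence, for a fixed bi-stable $L$, the number of $\lin$-supporters of $A_y$ equals the number of trees whose active rule concludes $y$, i.e. the number of votes for $y$, and the number of $\lin$-attackers equals the number of votes against $y$.

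First I would prove the forward direction. Assume $L(A_y)=\lin$. By the bi-complete labelling conditions, this means either all attackers of $A_y$ are $\lout$ or its supporters dominate its attackers. In the first case, every active rule must conclude $y$ (no rule can vote against $y$), so trivially $\outputf_{\rf}(\cinput)=y$ for all $\cinput \in S_L$. In the second case, by the domination condition the number of $\lin$-supporters strictly exceeds the number of non-$\lout$ attackers; since by Lemma~\ref{lemma_bistable_rule_properties} every rule argument is either $\lin$ or $\lout$, this says the votes for $y$ strictly exceed the votes for any competing class, so $y$ wins the majority vote and again $\outputf_{\rf}(\cinput)=y$. Next I would prove the converse: if $\outputf_{\rf}(\cinput)=y$ for all $\cinput\in S_L$, then a strict majority of trees have active rules concluding $y$; the corresponding $\lin$-rule arguments support $A_y$ and, being strictly more numerous than the attackers, dominate them, forcing $L(A_y)=\lin$ by the bi-complete condition.

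For the final separation claim, I would argue that $L(A_y)=\lin$ forces a strict majority of the $\lin$-rule arguments to support $A_y$. By the construction of the rule-class relationships, these same rule arguments attack every other class argument $A_{y'}$. Therefore, for each $A_{y'}$ with $y'\neq y$, its set of $\lin$-attackers strictly dominates its $\lin$-supporters (the votes for $y'$ are strictly fewer than those for $y$), so the attackers dominate the supporters and $L(A_{y'})=\lout$.

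The main obstacle I anticipate is being careful with the precise domination inequalities, which involve the asymmetric treatment of supporters (counting those not labelled $\lout$) versus attackers (counting those labelled $\lin$) in the bi-complete definition. The saving grace is that in a \emph{bi-stable} labelling no argument is $\lundec$, so ``not $\lout$'' collapses to ``$\lin$'' and the domination conditions reduce cleanly to the strict-majority condition on votes; I would make this simplification explicit at the outset to avoid having to track the $\lundec$ case throughout.
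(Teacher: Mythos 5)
Your proposal follows essentially the same route as the paper's own proof: reduce the class layer to vote counting via Lemma~\ref{lemma_bistable_rule_properties}, split the forward direction into the two bi-complete cases (all attackers $\lout$ versus supporters dominating), argue the converse by dominance of the supporting rule arguments, and obtain the separation claim from the fact that the $\lin$-supporters of $A_y$ attack every other class argument---your explicit remark that bi-stability collapses ``not $\lout$'' to ``$\lin$'' is a welcome clarification that the paper leaves implicit. Note only that your converse, exactly like the paper's, silently reads $\outputf_{\rf}(\cinput)=y$ as $y$ holding an absolute majority of the tree votes (dominance over \emph{all} competing classes combined), which coincides with the plurality rule defining $\outputf_{\rf}$ only in the two-class case, so you inherit rather than introduce that imprecision.
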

\begin{proof}
If $L(A_y) = \lin$, then all attacking arguments must
be $\lout$ or the supporters must dominate the attackers.
If all attackers are $\lout$, then no rule that picks
another class can be active in $\dt$ and so all active
rules must pick $y$. If the supporters dominate the 
attackers, then a majority of active rules (trees) 
vote for $y$.
Hence, in both cases $\outputf_{\rf}(\cinput) = y$
for all $\cinput \in S_L$.

Conversely, if $\outputf_{\rf}(\cinput) = y$
for all $\cinput \in S_L$,
then a majority of trees vote for $y$. The corresponding
rules will then support $A_y$ and dominate the attackers,
so that $L(A_y) = \lin$.

For the second claim, notice that 
$L(A_y) = \lin$ implies that a majority of $\lin$
rule features support $A_y$. However, by construction,
these rule features also attack all other class arguments.
Therefore, for these arguments, the attackers must
dominate the supporters, so that they will be labelled out.
\end{proof}

\begin{lemma}
\begin{enumerate}
    \item For all $\cinput \in \dom$, $L_{\cinput} \in \labellings^c(\bag_{\rf})$ .
    \item There is at most one $y \in \class$ such that
    $L_{\cinput}(A_y) = \lin$. Furthermore, if $L_{\cinput}(A_y) = \lin$ for some $y \in \class$, then $L_{\cinput}(A_{y'}) = \lout$ for all $y' \in \class \setminus \{y\}$.
\end{enumerate}
\end{lemma}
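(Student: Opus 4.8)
The plan is to verify the two bi-complete conditions layer by layer for Part~1, reusing the structural facts already established for the feature and rule layers, and then to read off Part~2 from the class-argument analysis. For the \emph{feature arguments}, note that $L_{\cinput}$ labels exactly one argument $A_{X_i \in S_{i,j}}$ per feature $\lin$ (the one with $x_i \in S_{i,j}$) and all others $\lout$; since feature arguments have no supporters and are attacked only by the other arguments of $\args_{X_i}$, the $\lin$ argument has all its attackers $\lout$, so the first bi-complete clause applies, while each $\lout$ argument is attacked by the single $\lin$ argument, so its $\lin$-attacker count exceeds its empty set of non-out supporters and its attackers dominate. For the \emph{rule arguments} I would reuse the reasoning of Lemma~\ref{lemma_bistable_rule_properties}: the active rule of a tree has no $\lin$ attacker because $\cinput$ is consistent with its premise and hence is $\lin$, whereas every inactive rule violates some premise literal, so the matching $\lin$ feature argument attacks it, its attackers dominate its (empty) supporters, and it is forced $\lout$.

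The delicate step is the \emph{class arguments}, where the construction of $L_{\cinput}$ decides $A_y$ purely by the two domination tests, while the bi-complete definition allows $A_y=\lin$ also when \emph{all} attackers of $A_y$ are $\lout$. I would reconcile these as follows. Writing $n_y$ for the number of trees whose active rule concludes $y$ and $t$ for the number of trees, the $\lin$ rule arguments are exactly the $t$ active rules, so $A_y$ has $n_y$ many $\lin$ supporters and $t-n_y$ many $\lin$ attackers; since no rule argument is $\lundec$, the two domination tests reduce to $n_y>t-n_y$ and $t-n_y>n_y$. The $\lout$ clause then matches the construction immediately. For the $\lin$ clause I must show the ``all attackers out'' disjunct introduces no spurious cases: if every attacker of $A_y$ is $\lout$ then $t-n_y=0$, i.e.\ every active rule concludes $y$, whence $n_y=t\ge 1>0=t-n_y$ and the supporters already dominate; conversely, when the construction leaves $A_y$ undecided we have $n_y=t-n_y=t/2$, which is positive because each tree contributes an active rule, so at least one attacker is $\lin$ and the ``all attackers out'' clause cannot fire. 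Hence $L_{\cinput}$ satisfies both bi-complete conditions on every class argument, completing Part~1.

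For Part~2 I would argue from the same counting. If $L_{\cinput}(A_y)=\lin$ then the supporters of $A_y$ dominate its attackers, i.e.\ $n_y>t-n_y$, so $n_y>t/2$. At most one class can receive strictly more than half of the $t$ votes, since two such classes would already account for more than $t$ votes; hence at most one class argument is $\lin$. Moreover, for any $y'\neq y$ the $n_y$ active rules concluding $y$ all attack $A_{y'}$, so $t-n_{y'}\ge n_y>t/2>n_{y'}$, i.e.\ the attackers of $A_{y'}$ dominate its supporters and $L_{\cinput}(A_{y'})=\lout$.

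I expect the main obstacle to be the class-argument case of Part~1, specifically checking that the disjunctive ``all attackers out'' clause in the bi-complete definition is consistent with the purely domination-based construction of $L_{\cinput}$. The observation that resolves it is that, because every tree always has exactly one active rule, the total number of $\lin$ rule arguments equals $t\ge 1$, so whenever $A_y$ is $\lundec$ its $\lin$-attacker count $t/2$ is strictly positive and the ``all attackers out'' clause is vacuous. The feature- and rule-layer verifications are routine once Lemma~\ref{lemma_bistable_rule_properties} is invoked.
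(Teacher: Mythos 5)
Your proof is correct and follows essentially the same route as the paper's: verify bi-completeness layer by layer (feature arguments directly, rule arguments via Lemma \ref{lemma_bistable_rule_properties}, class arguments by counting votes of $\lin$ rule arguments), with the same key observation that every tree contributes an active rule, so an undecided class argument always has an $\lin$ attacker and the ``all attackers out'' clause cannot fire. Your explicit $n_y$ versus $t-n_y$ counting is just a more detailed rendering of the argument the paper delegates to the proof of Proposition \ref{prop_bistable_class_properties}.
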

\begin{proof}
1. We start by checking that the labelling of the 
feature arguments satisfies the definition of a
bi-complete labelling.
For every feature $X_i$, exactly one feature argument
$A_{X_i \in S_{i,j}} \in \args_{X_i}$ is $\lin$ because the feature conditions are mutually exclusive by definition. 
Since $A_{X_i \in S_{i,j}}$ is only attacked by other feature arguments
in $\args_{X_i}$ that are out, it must indeed be $\lin$. 
Since all other feature arguments in $\args_{X_i}$ 
are attacked by $A_{X_i \in S_{i,j}}$ they must indeed be $\lout$. 
Hence, the feature arguments are correctly labelled.

The fact that all rule arguments are correctly labelled
follows by the same arguments that we used in the proof 
of Lemma \ref{lemma_bistable_rule_properties}.

Finally consider a class argument $A_y$. 
Using the same arguments as in the proof of 
Proposition \ref{prop_bistable_class_properties},
we can check that $A_y$ is labelled $\lin$ ($\lout$)
if and only if its supporters (attackers)
dominate its supporters. 
If $L_{\cinput}(A_y) = \lundec$, the number of $\lin$-attackers equals the number of $\lin$-supporters.
The definition of bi-complete labellings could only be
violated if this number is zero. However, since every tree
must contain an active rule, the number must be non-zero.
Hence, $L_{\cinput}$ is a bi-complete labelling.

2. Suppose that $L_{\cinput}(A_y) = \lin$. Then the
number of rule arguments (active rules/ trees) that are
$\lin$ and support $A_y$ must be larger than the number 
of rule arguments that are $\lin$ and attack $A_y$. Hence,
for every other class label $y' \in \class \setminus \{y\}$,
the number of rule arguments that are
$\lin$ and attack $A_{y'}$ must be larger than the number 
of rule arguments that are $\lin$ and support $A_{y'}$.
Hence, $L_{\cinput}(A_{y'}) = \lout$.
\end{proof}

\begin{proposition}[Completeness]
For all inputs $\cinput \in \dom$,
$\outputf_{\rf}(\cinput) \neq \bot$
if and only if
$L_{\cinput} \in \labellings^s(\bag_{\rf})$.
\end{proposition}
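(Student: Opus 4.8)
The plan is to reduce bi-stability of $L_{\cinput}$ to a condition on the class arguments alone, and then to read off the relevant domination conditions as vote counts. First I would invoke item~1 of Lemma~\ref{lemma_bistable_input_properties}, which already gives $L_{\cinput}\in\labellings^c(\bag_{\rf})$; thus only the step from bi-complete to bi-stable is in question. By the definition of $L_{\cinput}$, every feature argument is labelled $\lin$ or $\lout$ (depending on which partition set contains $\cinput$) and every rule argument is labelled $\lin$ or $\lout$ (depending on whether its rule is active), so none of these is ever $\lundec$. Hence $L_{\cinput}$ is bi-stable iff no class argument is labelled $\lundec$, and the proposition reduces to showing that no class argument is undecided iff $\outputf_{\rf}(\cinput)\neq\bot$.

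The core step is to express the domination conditions on a class argument $A_y$ in terms of votes. Since all rule arguments are labelled $\lin$ or $\lout$, a rule argument is ``not $\lout$'' precisely when it is $\lin$, i.e.\ when it is the active rule of its tree. Writing $t$ for the number of trees and $n_y$ for the number of trees whose active rule concludes $y$, the rule--class edges give $A_y$ exactly $n_y$ $\lin$-supporters (active rules with $\conc(r)=y$) and exactly $t-n_y$ $\lin$-attackers (active rules with $\conc(r)\neq y$). Consequently $A_y$'s supporters dominate its attackers iff $n_y>t-n_y$, its attackers dominate its supporters iff $t-n_y>n_y$, and $A_y$ is $\lundec$ iff $n_y=t-n_y$. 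So $L_{\cinput}$ is bi-stable iff some class attains a strict majority of the votes, and in that case item~2 of Lemma~\ref{lemma_bistable_input_properties} forces that class to be $\lin$ and every other class argument to be $\lout$.

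It then remains to match this vote condition with $\outputf_{\rf}(\cinput)\neq\bot$. The forward direction is clean: if $L_{\cinput}$ is bi-stable then some $A_y$ is $\lin$, so $n_y>t-n_y$, whence $y$ is the unique class with the most votes and $\outputf_{\rf}(\cinput)=y\neq\bot$. For the converse I would start from a non-ambiguous output, i.e.\ a unique plurality winner $y$, and argue that the corresponding active supporters dominate $A_y$'s attackers so that $A_y$ is $\lin$ and hence no class argument is $\lundec$. I expect this converse to be the main obstacle, because the domination rule compares the votes for $y$ against the \emph{combined} votes of all remaining classes, i.e.\ it encodes a strict \emph{majority} $n_y>t/2$, whereas a non-ambiguous output only guarantees a strict \emph{plurality}. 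For binary classification the two notions coincide and the argument goes through verbatim; for more than two classes one has to verify this correspondence carefully (or restrict to forests where the winning class is guaranteed a strict majority, such as an odd number of trees in the binary case), and I would make this the focal point of the write-up.
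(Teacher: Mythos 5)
Your strategy coincides with the paper's own proof: invoke item 1 of Lemma \ref{lemma_bistable_input_properties} for bi-completeness, note that feature and rule arguments are never $\lundec$, and reduce bi-stability to a vote count on the class arguments; your explicit count ($n_y$ $\lin$-supporters versus $t-n_y$ $\lin$-attackers, so $A_y$ is $\lundec$ iff $n_y=t/2$) is in fact sharper than the paper's prose. Your diagnosis of the converse is also correct, and it exposes a real defect: the paper's proof claims that $\outputf_{\rf}(\cinput)=y\neq\bot$ yields ``a majority of trees'' voting for $y$, which is exactly the plurality-versus-majority conflation you describe. With four trees and votes $(2,1,1)$ over three classes, the output is the plurality winner $y$, yet $A_y$ has two $\lin$-supporters and two $\lin$-attackers, so $L_{\cinput}(A_y)=\lundec$ and $L_{\cinput}\notin\labellings^s(\bag_{\rf})$. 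The converse therefore genuinely fails for more than two classes; it cannot be ``verified carefully'', only restricted away.

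The gap in your own write-up is the forward direction, which you call clean. Your claim that ``$L_{\cinput}$ is bi-stable iff some class attains a strict majority'' holds only in the ``if'' direction: bi-stability is equivalent to no class having exactly $t/2$ votes, and it does not force any class argument to be $\lin$, since all of them may be $\lout$. Take three trees over three classes voting $(1,1,1)$ on some input (three constant trees suffice, as in the paper's own proof of Proposition \ref{prop_ambiguous_complexity}): each class argument has one $\lin$-supporter and two $\lin$-attackers, so its attackers dominate and $L_{\cinput}$ labels all class arguments $\lout$; this labelling has no $\lundec$ argument and is bi-complete, hence bi-stable, yet $\outputf_{\rf}(\cinput)=\bot$. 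So the step ``bi-stable $\Rightarrow$ some $A_y$ is $\lin$'' is unjustified --- and the paper's proof makes the same unsupported move (``there must be a class label $y$ such that \dots''). In sum, the proposition and both proofs are correct exactly in the binary case, where plurality equals majority and $n_{y_1}\neq n_{y_2}$ iff neither equals $t/2$; for three or more classes the statement itself is false in both directions, so the restriction you flag at the end is not optional hygiene but a necessity, and it applies equally to the direction you thought was unproblematic.
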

\begin{proof}
Item 1 of Lemma \ref{lemma_bistable_input_properties}
guarantees that $L_{\cinput}$ is bi-complete.
By construction of $L_{\cinput}$, 
no feature and
rule arguments are labelled undecided.
Hence, the 
only case in which $L_{\cinput}$ is not a
bi-stable labelling is the case in which 
it labels a class argument undecided. 

If $L_{\cinput}$ is
bi-stable, there must be a class label $y \in \class$ such that the
number of rule arguments (active rules/ trees) that are
$\lin$ and support $A_y$ is larger than the number 
of rule arguments that are $\lin$ and attack $A_y$.
Hence, a majority of trees support $y$ and 
$\outputf_{\rf}(\cinput) = y \neq \bot$.
Conversely, if $\outputf_{\rf}(\cinput) = y \neq \bot$,
a majority of trees support $y$ and the corresponding 
active rule arguments that support $A_y$ will dominate 
$A_y$'s attackers. Hence, $L_{\cinput}(A_y) =\lin$ and
according to item 2 of Lemma \ref{lemma_bistable_input_properties}, $L_{\cinput}(A_{y'}) =\lout$
for the remaining class arguments. Hence, 
$L_{\cinput}$ is bi-stable.
\end{proof}

\begin{proposition}
$|\ambSet(\rf) /\! \equiv_\rf| = |\dom| - |\labellings^s(\bag_{\rf})|$.
\end{proposition}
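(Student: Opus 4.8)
The plan is to turn completeness into a bijection between the bi-stable labellings of $\bag_{\rf}$ and the non-ambiguous equivalence classes, and then simply count. Throughout I read $|\dom|$ as the number of equivalence classes $|\dom/\!\equiv_\rf|$, which is finite and, when all domains are finite, literally equals $|\dom|$.

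First I would split the finite set $\dom/\!\equiv_\rf$ into the ambiguous classes (representatives with $\outputf_{\rf}(\cinput)=\bot$) and the non-ambiguous ones; since members of a class are indistinguishable (Def.~\ref{def_indisting_rel}), this dichotomy is well-defined on classes and gives $|\dom/\!\equiv_\rf| = |\ambSet(\rf)/\!\equiv_\rf| + m$, where $m$ is the number of non-ambiguous classes. It therefore suffices to prove $m = |\labellings^s(\bag_{\rf})|$.

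To establish this, I would associate with every class $E$ the labelling $L_E := L_{\cinput}$ for an arbitrary $\cinput \in E$; this is well-defined because $L_{\cinput}$ depends only on the partition sets containing $\cinput$. By Completeness (Prop.~\ref{prop_arg_encoding_completeness}), $L_E \in \labellings^s(\bag_{\rf})$ exactly when $E$ is non-ambiguous, so $E \mapsto L_E$ sends the non-ambiguous classes into $\labellings^s(\bag_{\rf})$, and it is injective because distinct classes disagree on some accepted feature argument. For surjectivity I would argue that a bi-stable labelling is completely determined by its accepted feature arguments: Lemma~\ref{lemma_bistable_feature_properties} fixes exactly one accepted feature argument per feature, Lemma~\ref{lemma_bistable_rule_properties} then fixes all rule arguments, and the bi-complete conditions together with the no-undecided requirement fix the class arguments. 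Hence any $L \in \labellings^s(\bag_{\rf})$ equals $L_{\cinput}$ for $\cinput \in S_L$, and Completeness forces $S_L$ to be non-ambiguous; this makes $E \mapsto L_E$ onto, so $m = |\labellings^s(\bag_{\rf})|$.

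The only genuinely delicate step is surjectivity, i.e.\ ruling out ``spurious'' bi-stable labellings that would correspond to ambiguous classes. This is exactly where I would use that bi-stability forbids the $\lundec$ label: if the rule arguments leave some class argument in a tied domination situation, no bi-complete labelling can label it $\lin$ or $\lout$, so no bi-stable labelling exists for that (ambiguous) class. Everything else is bookkeeping over the finite partition, after which the identity $|\ambSet(\rf)/\!\equiv_\rf| = |\dom| - |\labellings^s(\bag_{\rf})|$ follows by rearranging $|\dom/\!\equiv_\rf| = |\ambSet(\rf)/\!\equiv_\rf| + |\labellings^s(\bag_{\rf})|$.
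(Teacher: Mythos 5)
Your proof is correct and takes essentially the same route as the paper's: the paper's one-line argument invokes correctness and completeness to assert that the bi-stable labellings are in bijection with the non-ambiguous equivalence classes, which is precisely the bijection $E \mapsto L_E$ you construct. Your explicit handling of surjectivity (ruling out spurious bi-stable labellings via the tied-domination argument) and your reading of $|\dom|$ as $|\dom /\! \equiv_\rf|$ fill in details the paper leaves implicit, but they are elaborations of the same argument rather than a different approach.
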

\begin{proof}
By correctness and completeness of our
encoding, 
$|\labellings^s(\bag_{\rf})|$ is the number
of non-ambiguous equivalence classes, 
that is, $|\ambSet(\rf) /\! \equiv_\rf| + |\labellings^s(\bag_{\rf})| = |\dom|$.
Reordering terms gives the claim.
\end{proof}

\begin{proposition}
If $N \subseteq \args_{\rf}$ is necessary for
$A_y$, then all 
$A \in N$ are necessary for 
$A_y$.
\end{proposition}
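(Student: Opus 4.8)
The plan is to unfold the definition of a necessary reason directly; the statement is essentially a consequence of the fact that a labelling accepts a \emph{set} of arguments precisely when it accepts each of its \emph{members}.

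First I would fix an arbitrary argument $A \in N$ and show that $A$ is necessary for $A_y$. To this end I would take an arbitrary bi-stable labelling $L \in \labellings^s(\bag_{\rf})$ with $L(A_y) = \lin$, i.e.\ one that accepts $A_y$, and argue that it must also accept $A$.

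Since $N$ is necessary for $A_y$ and $L$ accepts $A_y$, the definition of necessity yields that $L$ accepts $N$, that is, $L(B) = \lin$ for every $B \in N$. As $A \in N$, this gives $L(A) = \lin$ in particular. Because $L$ was an arbitrary bi-stable labelling accepting $A_y$, we conclude that every such labelling also accepts $A$, which is exactly the statement that $A$ is necessary for $A_y$. Since $A \in N$ was arbitrary, all $A \in N$ are necessary for $A_y$.

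I do not expect any genuine obstacle here: the only point worth making explicit is that ``a labelling accepts a set of arguments'' is interpreted pointwise (every element is labelled $\lin$), so the necessity of the whole set $N$ immediately splits into the necessity of each of its elements. In other words, the conjunctive reading of acceptance makes the implication from the set to its members trivial. (The converse direction, that necessity of every element would imply necessity of the whole set, holds equally but is not what the proposition asks for, so I would not address it.)
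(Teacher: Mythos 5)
Your proof is correct and takes essentially the same route as the paper's: fix a labelling accepting $A_y$, invoke necessity of $N$ to get acceptance of $N$, and read off acceptance of $A \in N$ pointwise. No differences worth noting.
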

\begin{proof}
Consider an arbitrary labelling $L$ that accepts
$A_y$. Then $L$ also accepts $N$ because $N$
is necessary for $A_y$. Since $A \in N$,
$L$ also accepts $A$. Hence, $A$ is also
necessary for accepting $A_y$.
\end{proof}

\begin{proposition}
The explanation plausibility distribution 
$\mathrm{Pl}_{\rf}(\bX)$
can be generated from $\rf$ in log-linear time.
\end{proposition}
\begin{proof}
The proof has been sketched in the main part of
the paper before the proposition.
\end{proof}

\begin{proposition}
For every assignment $\bx$ to $\mathrm{Pl}_{\rf}(\bX)$,
we have $\mathrm{Pl}_{\rf}(\bx) \in \{0,1\}$.
Furthermore, $\mathrm{Pl}_{\rf}(\bx) \neq 0$ if and only if 
$L_{\bx}$ is a bi-stable labelling of the explanation BAG.
\end{proposition}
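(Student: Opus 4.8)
The plan is to prove the two claims separately, starting with the observation that both tree factors and the class factor are defined to return values in $\{0,1\}$. First I would establish that $\mathrm{Pl}_{\rf}(\bx) \in \{0,1\}$: since $\mathrm{Pl}_{\rf}(\bx)$ is by definition a product of the class factor and all tree factors, and each of these factors returns either $0$ or $1$ by construction, the product is a product of zeros and ones, hence lies in $\{0,1\}$. This part is immediate.

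For the second claim I would prove both directions of the biconditional. For the forward direction, suppose $\mathrm{Pl}_{\rf}(\bx) = 1$, so every factor returns $1$. The key structural fact, established by the definition of $L_{\bx}$, is that $L_{\bx}$ labels exactly one feature argument $\lin$ per feature, exactly one rule argument $\lin$ per tree, and exactly one class argument $\lin$. I would then verify $L_{\bx}$ satisfies the bi-stable labelling conditions feature-layer by feature-layer, reusing the reasoning from Lemma \ref{lemma_bistable_input_properties}: the $\lin$ feature argument is attacked only by $\lout$ feature arguments (its own-feature competitors), each tree factor returning $1$ forces $U_\dt$ to equal the active rule for the chosen feature assignment so the $\lin$ rule argument has all its feature-attackers $\lout$ and the right supporters $\lin$, and the class factor returning $1$ certifies that the $\lin$ class wins the majority vote so its supporters dominate its attackers. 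Since no argument is labelled $\lundec$, $L_{\bx}$ is bi-stable.

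For the contrapositive of the reverse direction, I would assume $\mathrm{Pl}_{\rf}(\bx) = 0$ and show $L_{\bx}$ is not bi-stable, splitting into cases by which factor vanishes. If a tree factor $\phi_{\dt}(\bX_{\dt})$ returns $0$, then $U_\dt = r'$ for some $r'$ that is \emph{not} the active rule $r$ for the chosen feature assignment; hence $L_{\bx}(A_{\dt,r}) = \lout$, yet because $r$ is active all of $A_{\dt,r}$'s feature-attackers are inconsistent with the $\lin$ feature configuration and must themselves be $\lout$, so no attacker of $A_{\dt,r}$ is $\lin$ and $A_{\dt,r}$ cannot legitimately be $\lout$ under a bi-complete (hence bi-stable) labelling. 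If instead the class factor returns $0$, the class $y$ with $U_\class = y$ does not win the majority; but $L_{\bx}(A_y) = \lin$ by definition while $A_y$'s supporters fail to dominate its attackers, violating the bi-stable condition. In either case $L_{\bx}$ is not bi-stable, completing the argument.

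The main obstacle I anticipate is the tree-factor case of the contrapositive: one must carefully argue that when the active rule $r$ is not the one chosen for $U_\dt$, the rule argument $A_{\dt,r}$ being $\lout$ genuinely contradicts bi-stability. This requires the precise correspondence (from the feature-rule attack definition) that $r$ being active means exactly that none of its premise literals is violated by the accepted feature arguments, so $A_{\dt,r}$ has no $\lin$ attacker and thus its attackers cannot dominate its supporters — the rest is a mechanical appeal to the earlier lemmas.
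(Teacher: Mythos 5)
Your proposal is correct and follows essentially the same route as the paper's proof: the first claim via the product of $\{0,1\}$-valued factors, the forward direction by noting $L_{\bx}$ accepts exactly one feature argument per feature, one rule argument per tree and one class argument and then reusing the reasoning of Lemma \ref{lemma_bistable_input_properties}, and the reverse direction by contraposition with a case split on whether a tree factor or the class factor vanishes. The only cosmetic difference is in the tree-factor case, where you argue that $A_{\dt,r}$ cannot be labelled $\lout$ because no attacker is $\lin$, while the paper argues it would have to be labelled $\lin$ because all its attackers are $\lout$ --- two equivalent readings of the same bi-complete conditions.
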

\begin{proof}
For the first claim, observe that every factor returns either $0$ or $1$.
Since $\mathrm{Pl}_{\rf}(\bx)$ is the product of the returned numbers, it must
be either $0$ or $1$.

For the second claim, first assume that $\mathrm{Pl}_{\rf}(\bx) \neq 0$.
Note that, by definition, $L_{\bx}$ accepts exactly 
one feature argument per feature, one rule argument per tree
and one class argument. Using the same line of reasoning as in item 1 of the proof of Lemma 
\ref{lemma_bistable_input_properties},
we can check that $L_{\bx}$ is bi-stable.

Now assume that $\mathrm{Pl}_{\rf}(\bx) = 0$.
This can only happen if one factor returns $0$.
If a tree factor $\phi_{\dt}(\bX_{\dt})$ returns $0$, 
then $U_{\dt} \neq r$, where $r$ is the active rule
for the assignment of the feature variables. 
Hence, $L_{\bx}(A_{\dt,r}) = \lout$.
However, since $r$ is the active rule, all feature variables
are assigned feature constraints that are consistent with
the rule. Hence, the feature arguments that are in are all
consistent with $A_{\dt,r}$. Hence, all attackers of $A_{\dt,r}$
and must be labelled $\lin$ by a bi-stable labelling.
Hence, $L_{\bx}$ is not a bi-stable labelling.

Similarly, if the class factor $\phi_{\class}(\bX_{\class})$
returns $0$, then the class $y$ assigned to $U_\class$ does 
not win the majority vote. By definition of
$L_{\bx}$, we have $L_{\bx}(A_y) = \lin$, but $A_y$'s supporters
do not dominate its attackers (for otherwise, $y$ would have 
won the majority vote). Hence, $L_{\bx}$ is again not a 
bi-stable labelling.
\end{proof}

\begin{proposition}
\begin{enumerate}
    \item If all features are categorical, then $Z = |\dom| - |\ambSet(\rf)|  = |\labellings^s(\bag_{\rf})|$ is the number of equivalence classes of non-ambiguous inputs for $\rf$.
    \item Let $\bY_F$ be a subsequence of feature random variables. Then
    $\mathrm{Pl}_{\rf}(C=y, \by_F) = \frac{N_{(C=y, \by_F)}}{Z},$ where
    $N_{(C=y, \by_F)}$ is the number of
    bi-stable labellings that accept all
    arguments in $S_{\by_F} \cup \{A_y\}$.
    \item $\mathrm{P}_{\rf}(C=y \mid \by_F) = \delta$ if and only if
    $S_{\by_F}$ is a $\delta$-sufficient
    reason for $A_y$.
    \item $\mathrm{P}_{\rf}(\by_F \mid C=y) = \delta$ if and only if 
    $S_{\by_F}$ is a $\delta$-necessary
    reason for $A_y$.
\end{enumerate}
\end{proposition}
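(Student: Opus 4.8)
The plan is to reduce every claim to counting bi-stable labellings, using Proposition~\ref{prop_plausibility} as the single workhorse: it tells us $\mathrm{Pl}_{\rf}(\bx)\in\{0,1\}$ and that $\mathrm{Pl}_{\rf}(\bx)=1$ exactly when $L_{\bx}$ is bi-stable. First I would record that the map $\bx\mapsto L_{\bx}$ restricts to a bijection between the support $\{\bx\mid\mathrm{Pl}_{\rf}(\bx)=1\}$ and $\labellings^s(\bag_{\rf})$. Injectivity holds because the feature, tree and class variables can be read back off $L_{\bx}$ from its accepted feature/rule/class arguments, and surjectivity holds because a bi-stable labelling accepts exactly one feature argument per feature, one rule argument per tree, and exactly one class argument (Lemmas~\ref{lemma_bistable_feature_properties} and~\ref{lemma_bistable_rule_properties} and Proposition~\ref{prop_bistable_class_properties}), which determines a unique $\bx$ in the support.

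For item~1, summing the plausibility over all states and applying Proposition~\ref{prop_plausibility} collapses the sum to the size of the support, so $Z=\sum_{\bx}\mathrm{Pl}_{\rf}(\bx)=|\labellings^s(\bag_{\rf})|$ by the bijection. Substituting this into Proposition~\ref{prop_count_ambigous_vs_labellings} gives $Z=|\dom|-|\ambSet(\rf)/\!\equiv_\rf|$. The categorical hypothesis is then used exactly once: when every feature domain is finite, each equivalence class of $\equiv_\rf$ is a singleton, so $|\ambSet(\rf)/\!\equiv_\rf|=|\ambSet(\rf)|$ and equivalence classes coincide with inputs, yielding $Z=|\dom|-|\ambSet(\rf)|$.

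For items~2--4 I would prove one marginalisation identity and read off all three. The identity is that for any event obtained by fixing a subset of variables to a partial assignment, the Gibbs probability equals (number of bi-stable labellings consistent with that assignment)$/Z$. Concretely $P_{\rf}(C=y,\by_F)=\frac{1}{Z}\sum_{\bx\text{ consistent}}\mathrm{Pl}_{\rf}(\bx)$, and by Proposition~\ref{prop_plausibility} the consistent states with plausibility $1$ are precisely the bi-stable labellings accepting every argument in $S_{\by_F}\cup\{A_y\}$ (the assignment $C=y,\by_F$ translates, via the definitions of $L_{\bx}$ and $S_{\by_F}$, into labelling exactly those arguments $\lin$); this sum is $N_{(C=y,\by_F)}$, giving item~2. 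The same computation with the class variable free gives $P_{\rf}(\by_F)=N_{(\by_F)}/Z$, and with only the class fixed gives $P_{\rf}(C=y)=N_{(C=y)}/Z$. Items~3 and~4 then follow by expanding conditional probabilities, with the $Z$ factors cancelling: $P_{\rf}(C=y\mid\by_F)=N_{(C=y,\by_F)}/N_{(\by_F)}$ and $P_{\rf}(\by_F\mid C=y)=N_{(C=y,\by_F)}/N_{(C=y)}$. These ratios are, respectively, the fraction of bi-stable labellings accepting $S_{\by_F}$ that also accept $A_y$, and the fraction accepting $A_y$ that also accept $S_{\by_F}$, which are exactly the definitions of $\delta$-sufficient and $\delta$-necessary reasons, so equating each with $\delta$ closes the equivalences.

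The argument is essentially bookkeeping once Proposition~\ref{prop_plausibility} is available, so I expect no deep obstacle. The points requiring genuine care are the bijection in the first paragraph (ensuring no two support states collapse to the same labelling and that every bi-stable labelling is realised), and the precise dictionary between the partial assignment ``$C=y,\by_F$'' and the statement ``$L$ accepts $S_{\by_F}\cup\{A_y\}$'', since getting that dictionary exactly right is what guarantees the counted sets $N_{(\cdot)}$ coincide with the sets in the definitions of $\delta$-sufficiency and $\delta$-necessity. I would also flag the categorical assumption in item~1 as the one non-formal step, since for continuous features equivalence classes need not be singletons and $|\dom|$ is infinite.
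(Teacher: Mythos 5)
Your proof is correct and takes essentially the same route as the paper's: Proposition~\ref{prop_plausibility} is used as the single workhorse to reduce everything to counting bi-stable labellings, giving $Z=|\labellings^s(\bag_{\rf})|$ (combined with Proposition~\ref{prop_count_ambigous_vs_labellings} for item~1) and the marginal identities, with items~3 and~4 obtained by cancelling $Z$ in the conditional probabilities. The only difference is presentational: you spell out the bijection between support states and bi-stable labellings and the dictionary between partial assignments and accepted arguments, both of which the paper's proof uses implicitly.
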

\begin{proof}
1. Using Proposition \ref{prop_plausibility}, we have that
$$Z = \sum_{\bx} \mathrm{Pl}_{\rf}(\bx)
= \sum_{\bx: L_{\bx} \ \textrm{is bi-stable}} 1 = |\labellings^s(\bag_{\rf})|$$
is the number of bi-stable labellings of the explanation BAG. From this, we obtain the claim by
putting $Z = |\labellings^s(\bag_{\rf})|$ in
Proposition \ref{prop_count_ambigous_vs_labellings}.

2. Using again Proposition \ref{prop_plausibility}, we have
$$P_{\rf}(C=y, \by_F)
= \frac{1}{Z} \sum_{
  \substack{ 
      \bx: L_{\bx} \ \textrm{is bi-stable} \\
      L_{\bx}(S_{(C=y, \by_F)})=\lin
    } 
}    1 
= \frac{N_{(C=y, \by_F)}}{Z}.
$$

3 and 4. Similar to the previous item, we can show that 
$P_{\rf}(C=y) = \frac{N_{(C=y)}}{Z}$
and
$P_{\rf}(\by_F) = \frac{N_{(\by_F)}}{Z}$,
where $N_{(C=y)}$ is the number of
bi-stable labellings that accept $\{A_y\}$
and $N_{(\by_F)}$ is the number of
bi-stable labellings that accept all
arguments in $S_{\by_F}$. Using the previous
item and the definition of conditional probability
we have that 
\begin{align*}
  \mathrm{P}_{\rf}(C=y \mid \by_F)
 &= \frac{\mathrm{P}_{\rf}(C=y, \by_F)}{\mathrm{P}_{\rf}(\by_F)}
 =
\frac{
 \frac{N_{(C=y, \by_F)}}{Z}
}{
 \frac{N_{(\by_F)}}{Z}
}\\
&= 
\frac{N_{(C=y, \by_F)}}{N_{(\by_F)}}
\end{align*}
Note that this is just the percentage of 
labellings that accept both  $A_y$ and $S_{\by_F}$ among those that accept
$S_{\by_F}$. This implies 3. 4 follows
analogously.
\end{proof}

\begin{proposition}
When sampling inputs uniformly and independently in the algorithm in Figure 2, then $N_n/(N_n + N_a)$ converges 
in probability to the percentage of non-ambiguous equivalence classes and $e_{(\bz_i \mid \by_i)} = N^+_{(\bz_i, \by_i)}/(N^+_{(\bz_i, \by_i)} + N^-_{(\bz_i, \by_i)})$ to $\mathrm{P}_{\rf}(\bz_i \mid \by_i)$
for $1 \leq i \leq l$. Every iteration runs in linear time with respect to
$\rf$ and the number of queries $l$. Furthermore,
if we have $M \geq \frac{3 \ln(2/\delta)}{P(\bz_i \mid \by_i) \cdot \epsilon^2}$ samples for $e_{(\bz_i \mid \by_i)}$, then we have the error bound
$$
P(e_{(\bz_i \mid \by_i)}\in \mathrm{P}_{\rf}(\bz_i \mid \by_i) \cdot (1 \pm \epsilon)) \geq 1 - \delta.
$$
\end{proposition}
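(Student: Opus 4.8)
The plan is to split the proposition into its three assertions --- convergence in probability, per-iteration runtime, and the multiplicative error bound --- and, after pinning down precisely which distribution the algorithm samples from, to dispatch each with the weak law of large numbers, a direct cost count, and a Chernoff bound respectively.

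First I would establish the key reduction to i.i.d.\ Bernoulli trials. Sampling an equivalence class uniformly amounts to sampling an assignment to the feature random variables uniformly; since the tree and class factors are deterministic, this fixes a unique full assignment $\bx$, and by Proposition \ref{prop_plausibility} we have $\mathrm{Pl}_{\rf}(\bx)=1$ exactly when $L_{\bx}$ is bi-stable (non-ambiguous) and $\mathrm{Pl}_{\rf}(\bx)=0$ otherwise. Hence $P_{\rf}$ is uniform over the non-ambiguous equivalence classes and zero on the ambiguous ones, so rejecting ambiguous draws is rejection sampling whose accepted draws are i.i.d.\ from $P_{\rf}$. Consequently the non-ambiguity indicator is Bernoulli with mean equal to the true fraction of non-ambiguous classes, and --- restricting to draws that satisfy $\by_i$ --- the indicator of ``$\bz_i$ holds'' is Bernoulli with mean $P_{\rf}(\bz_i \mid \by_i)$ by the definition of conditional probability under $P_{\rf}$.

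Given this, the convergence claims are immediate from the weak law of large numbers: $N_n/(N_n+N_a)$ is the empirical mean of i.i.d.\ non-ambiguity indicators, and $e_{(\bz_i \mid \by_i)}=N^+_{(\bz_i,\by_i)}/(N^+_{(\bz_i,\by_i)}+N^-_{(\bz_i,\by_i)})$ is the empirical mean of the conditional-target indicators among the draws meeting the condition (provided $P_{\rf}(\by_i)>0$, so that the denominator grows). For the runtime I would note that drawing one feature assignment and evaluating its active rule in each tree and the resulting class vote is linear in $\rf$, while the loop over queries performs the constant-cost comparisons $\bx|_{\bY_i}=\by_i$ and $\bx|_{\bZ_i}=\bz_i$ for each of the $l$ queries; summing gives per-iteration cost linear in $\rf$ and $l$.

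Finally, for the error bound I would invoke the multiplicative Chernoff bound for forward/rejection sampling (Section 12.1 of \cite{koller_probabilistic_2009}). Conditioning on having obtained $M$ accepted draws that satisfy $\by_i$, the estimator is the mean of $M$ i.i.d.\ Bernoulli$(p)$ variables with $p=P_{\rf}(\bz_i \mid \by_i)$, so $P(|e_{(\bz_i \mid \by_i)}-p|>\epsilon p)\le 2\exp(-Mp\epsilon^2/3)$; equating the right-hand side with $\delta$ and solving for $M$ returns precisely the stated threshold $M\ge \frac{3\ln(2/\delta)}{p\,\epsilon^2}$. The main obstacle is the first step: one must argue carefully that determinism of the factors genuinely converts uniform feature sampling into i.i.d.\ sampling from $P_{\rf}$ and that discarding ambiguous samples leaves the conditional estimates unbiased, since every subsequent law-of-large-numbers and Chernoff estimate silently relies on the samples having the right distribution.
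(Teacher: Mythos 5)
Your proposal is correct and follows essentially the same route as the paper's own proof: determinism of the tree and class factors turns uniform, independent feature sampling into i.i.d.\ samples from $P_{\rf}$, the law of large numbers gives both convergence claims, a direct cost count gives the per-iteration runtime, and the multiplicative Chernoff bound for forward/rejection sampling (Section 12.1 of \cite{koller_probabilistic_2009}) gives the stated sample-size threshold. If anything, your write-up is more explicit than the paper's, which only sketches the key reduction you spell out via Proposition \ref{prop_plausibility} (uniformity of $P_{\rf}$ over non-ambiguous equivalence classes) and does not state the Chernoff inequality or the positivity requirement $P_{\rf}(\by_i)>0$ explicitly.
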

\begin{proof}
When sampling inputs uniformly and independently, the completion
method will generate uniform and independent inputs for the Markov
network. This is because an assignment to the input variables
uniquely determines the state of the tree and class variables
because the factors are deterministic. 
As our estimates are composed of sums of
independent Bernoulli random variables, whose
individual expected values are the probabilities
of the associated event, the convergence guarantee
in probability for all estimates follows from the law of large numbers.

For the runtime guarantee, note that sampling an input and 
completing it can clearly be done in linear time with respect
to $\rf$. The counting check can be done in linear time with
respect to the number of random variables in $P_{\rf}$, which
is linear with respect to $\rf$. The number of iterations is $l$. 

For the error bound, first recall that
the dependency structure of the Markov network is acyclic as we explained in the main part. This
allows applying forward sampling algorithms for
Bayesian networks. The samples for
$e_{(\bz_i \mid \by_i)}$ can be seen as samples
for the marginal distribution
$P(\bz_i) = P_{\rf}(\bz_i \mid \by_i)$ (rejection sampling).
The error bounds for the queries
therefore follow from the Chernoff bounds for
forward sampling applied to $P(\bz_i)$,
see Section 12.1 in 
\cite{koller_probabilistic_2009}.
\end{proof}

\section{Experiments}

We conducted all experiments on a windows
laptop with i7-11800H CPU and 16 GB RAM. 
The sourcecode and Jupyter notebooks to reproduce the experiments are available under
\url{https://github.com/nicopotyka/Uncertainpy}
in the folders \emph{src/uncertainpy/explanation}
and \emph{examples/explanations/randomForests}, respectively.
The experiments for the three different
datasets can be found in three Jupyter
notebooks that work out of the box when
the standard packages
Numpy, Scikit-learn and Pandas are installed.

We used the Iris dataset from Scikit-learn 
(it will run out of the box when Scikit-learn is installed)
and added the Mushroom and PIMA dataset in the source
folder (we used relative paths, so they should also 
work out of the box).

We give an overview of sample size,
runtime, percentage of non-ambiguous inputs
and show some example explanations for each
dataset. In general, generating 10,000 samples
took about 50 seconds on our laptop and scaled
up linearly. For example, generating 100,000
samples took about 500 seconds. 10,000 samples
are usually sufficient to get a good estimate
for the percentage of non-ambiguous inputs.
If the domain is partitioned in many equivalence
classes, we may require more samples to get
a sufficient number of samples for the 
sufficient candidates of length 1 in stage 1
because we currently perform rejection sampling
in this stage. The runtime can probably be further
improved by switching from rejection sampling
to a conditional forward sampling (as we do in stage 2) once sufficient
samples for the estimate of non-ambiguous inputs
have been collected.

\subsection{Iris}

Stage 1 completed in 50 seconds (10,000 samples) and the estimated percentage of non-ambiguous inputs was $98$ \%. We usually had several hundred samples for the estimates of our almost sufficient and necessary reasons of length 1 in 
stage 1. In stage 2, various almost sufficient and necessary reasons have been identified within
seconds (based on about 100 samples) and we stopped
the generation after a few minutes. 
We list the 
probabilities of some hand-picked examples and
refer to the notebook for a more exhaustive list
(the classes are \emph{Setosa}, \emph{Versicolour} and \emph{Virginica}).
\begin{align*}
&P( virginica | 'petal length (cm)'=(5.0, 5.14))=1.0 \\
&(407   \ samples) \notag \\[0.2cm]
&P( virginica | 'petal length (cm)'=(5.14, 5.2))=1.0  \\
&(392   \ samples)  \notag \\[0.2cm]
&P( virginica | 'petal length (cm)'=(5.2, 5.25))=1.0 \\
&(414   \ samples)  \notag \\[0.2cm]
&P( versicolor | 'sepal length (cm)'=(5.45, 5.5), 'petal length (cm)'=(3.19, 4.7))=1.0 \\
&(95  \ samples) \notag  \\[0.2cm]
&P( versicolor | 'sepal width (cm)'=(-\infty, 2.25),  'petal length (cm)'=(2.8, 3.19))=0.94 \\
&(91  \ samples)   \notag 
\end{align*}

\subsection{Mushroom}

Stage 1 completed in 52 seconds (10,000 samples) and the estimated percentage of non-ambiguous inputs was $97.5$ \%. We usually had more than
1,000 samples for the estimates of our almost sufficient and necessary reasons of length 1 in 
stage 1. In stage 2, various almost sufficient and necessary reasons have been identified within
seconds (based on about 100 samples) and we stopped
the generation after a few minutes. 
We list the 
probabilities of some hand-picked examples and
refer to the notebook for a more exhaustive list
(the classes are \emph{edible} and \emph{poisonous}).
Let us note that the notebook shows some reasons
with feature values $-999$. These should be ignored.
When a random forest does not use a feature, our
implementation of the
sampling algorithm currently always return $-999$
for this feature to indicate that the feature is irrelevant. 
\begin{align*}
&P( poisonous | 'odor\_f'=1)=0.99 \\
&(4982  \ samples) \\[0.2cm]
&P( 'odor\_f'=0 | edible)=0.98 \\
&(1380  \ samples)\\[0.2cm]
&P( 'gill-color\_b'=0 | edible)=0.76 \\
&(1380  \ samples)\\[0.2cm]
& P( poisonous | 'cap-shape\_b'=0,  'cap-surface\_f'=0)=0.90 \\
&(98  \ samples)\\[0.2cm]
&P( poisonous | 'cap-shape\_b'=1,  'gill-spacing\_c'=1)=0.90 \\
&(98  \ samples)
\end{align*}

\subsection{PIMA}

Stage 1 completed in 256 seconds (50,000 samples) and the estimated percentage of non-ambiguous inputs was $98$ \%. The number of samples for the estimates of our almost sufficient and necessary reasons of length 1 was between
40 and 260 in 
stage 1. In stage 2, various almost sufficient and necessary reasons have been identified within
seconds (based on about 100 samples) and we stopped
the generation after a few minutes. 
We list the 
probabilities of some hand-picked examples and
refer to the notebook for a more exhaustive list
(the classes are \emph{Pos} and \emph{Neg}).
\begin{align*}
&P( Pos | 'Glucose'=(196.5, \infty))=0.95 \\
&(228   \ samples) \\[0.2cm]
&P( Neg | 'BMI'=(14.94, 15.6))=0.91 \\
&(104   \ samples)\\[0.2cm]
&P( Pos | 'DiabetesPedigreeFunction'=(1.180, 1.181))=0.93 \\
&(44   \ samples)\\[0.2cm]
& P( Neg | 'Pregnancies'=(-\infty, 0.5),  'Glucose'=(42, 47))=0.96 \\
&(99   \ samples)\\[0.2cm]
&P( Neg | 'Pregnancies'=(-\infty, 0.5),  'BMI'=(26.44, 26.5))=0.92 \\
&(99   \ samples)
\end{align*}

\bibliographystyle{apalike}
\newcommand{\etalchar}[1]{$^{#1}$}

\end{document}